\newcommand\norm[1]{\lVert #1 \rVert}
\renewenvironment{proof}[1][\proofname]{\par\noindent\textbf{#1.} }{\hfill$\square$\par}
\newtheorem{theorem}{Theorem}
\newtheorem{lemma}[theorem]{Lemma}
\title{\LARGE \bf
Toward Embodiment Equivariant Vision-Language-Action Policy
}
\author{Anzhe Chen, Yifei Yang, Zhenjie Zhu, Kechun Xu, Zhongxiang Zhou, Rong Xiong, Yue Wang$^*$
}
\begin{document}

\maketitle
\thispagestyle{empty}
\pagestyle{empty}

\bstctlcite{IEEEexample:BSTcontrol}


\begin{abstract}
    Vision-language-action policies learn manipulation skills across tasks, environments and embodiments through large-scale pre-training. However, their ability to generalize to novel robot configurations remains limited. Most approaches emphasize model size, dataset scale and diversity while paying less attention to the design of action spaces. This leads to the configuration generalization problem, which requires costly adaptation. We address this challenge by formulating cross-embodiment pre-training as designing policies equivariant to embodiment configuration transformations. Building on this principle, we propose a framework that (i) establishes a embodiment equivariance theory for action space and policy design, (ii) introduces an action decoder that enforces configuration equivariance, and (iii) incorporates a geometry-aware network architecture to enhance embodiment-agnostic spatial reasoning. Extensive experiments in both simulation and real-world settings demonstrate that our approach improves pre-training effectiveness and enables efficient fine-tuning on novel robot embodiments. Our code is available at \url{https://github.com/hhcaz/e2vla}
\end{abstract}


\section{Introduction}

Vision-language-action (VLA) policies have attracted increasing attention in recent years. These policies are typically pre-trained on large-scale cross-embodiment datasets that span diverse tasks and environments, and then fine-tuned for specific embodiments and downstream tasks. While such pre-training has shown effectiveness, it often requires the target embodiment to share similar robot configurations with those seen during pre-training. Consequently, when faced with novel embodiments, the policy tends to fail, leading to considerable additional training costs for adaptation.

Ideally, a VLA policy should exhibit a certain degree of zero-shot generalization to unseen robot configurations (e.g., variations in the coordinate definitions of the robot base and end-effector) after pre-training. Such capability would enable efficient fine-tuning with minimal additional data and training. However, existing approaches \cite{rt1, RT-2,rdt,pi0,openvla} to VLA pre-training focus primarily on scaling up parameters, datasets and task diversity, while paying limited attention to the design of neural architectures and action spaces. 
leaving the configuration generalization problem unresolved. 
Without configuration generalization, the policy may tend to learn the embodiment-specific knowledge during pre-training, which hinders the sharing of task-level knowledge and reduces the overall effectiveness of cross-embodiment pre-training.

A key requirement for cross-embodiment pre-training, therefore, is to prevent networks from hard-memorizing embodiment-specific details. This motivates the question: \textit{Is there a framework to unifies action space design across embodiments to learn task-level knowledge but not embodiment-specific kinematics details?}

\begin{figure}[t]
\includegraphics[width=\linewidth]{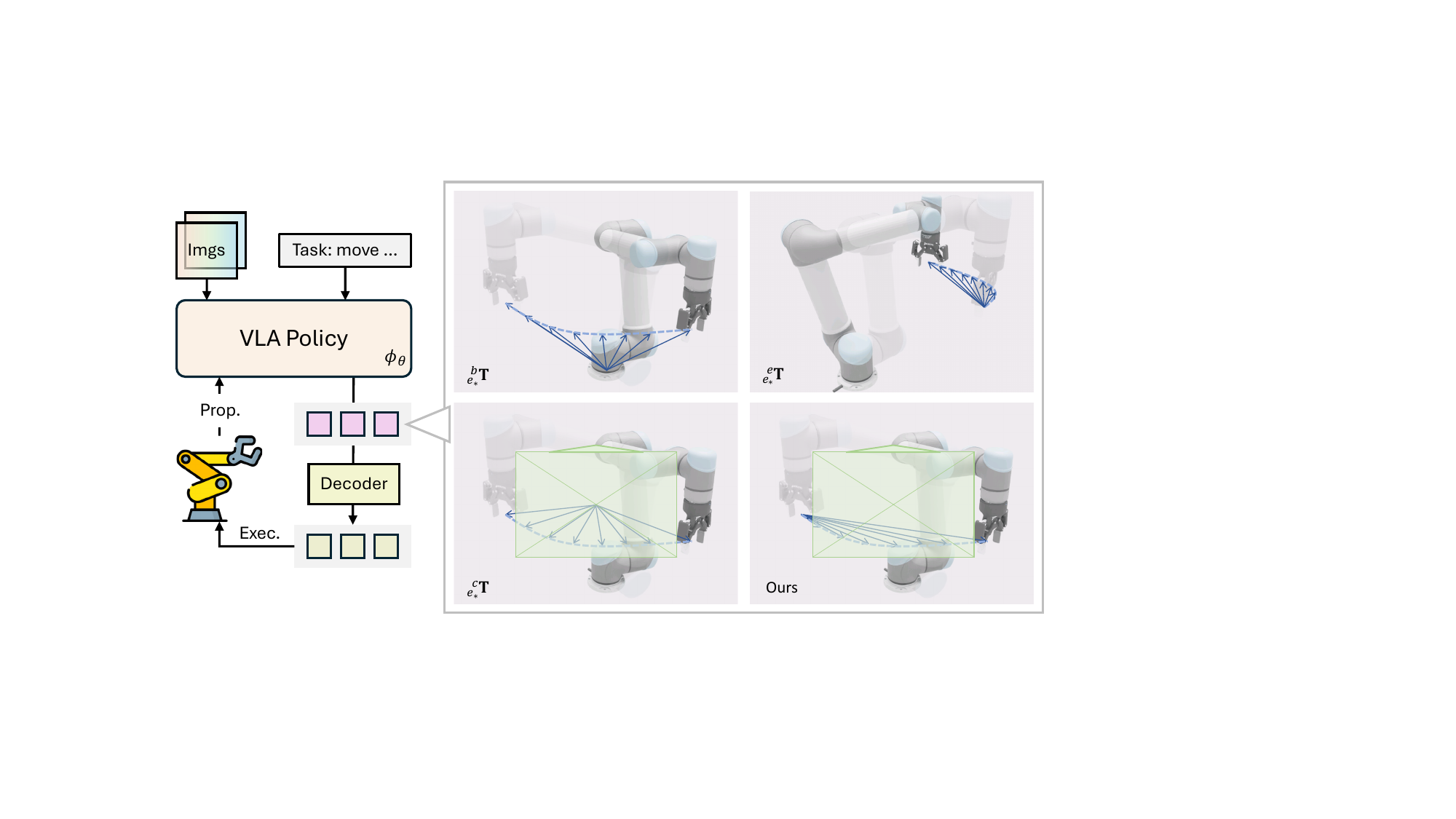}
\centering
\caption{Different action space design for VLA policy learning. We learn the relative end-effector motion projected in camera space.}
\label{fig: teaser}
\end{figure}

We draw inspiration from equivariance, where neural networks are endowed with symmetry properties by design rather than through learning from data. For example, $\mathrm{E}(2)$-equivariant CNN \cite{e2cnn} generalizes this property to rotations and reflections in the Euclidean plane. It is applied in medical imaging, satellite imagery where objects may appear at arbitrary orientations. $\mathrm{SE}(3)$-Transformers \cite{se3transformer} and Equivariant Graph Neural Networks \cite{engnn} ensure equivariance under 3D rotations and translations, which are applied in protein structure prediction, molecular dynamics modeling and 3D perception tasks. Compared with data augmentation, equivariant models encode inductive biases more efficiently, leading to reduced model size and improved generalization to unseen transformations. 

We hypothesize that the limited generalization ability of existing VLA policies stems from their use of joint-space actions or end-effector pose in base (Fig. \ref{fig: teaser}), which makes the overall policy non-equivariant to embodiment configuration transformations and thus leads to overfitting to specific embodiments during pre-training. Therefore, we propose to formulate cross-embodiment learning as the design of policy equivariant to embodiment configuration transformations. Such a formulation enables systematic generalization to novel embodiments and makes pre-training more effective. Our main contributions are as follows:
\begin{itemize}
    \item Policy equivariance theory that unifies action space design across embodiments;
    \item An action decoder that is equivariant to embodiment configuration transformations;
    \item A network architecture that enhances geometric reasoning within the equivariant policy;
    \item Simulation and real-world validation of effective pre-training and efficient fine-tuning with our approach.
\end{itemize}

\section{Related Works}

\textbf{VLA Policies.}
Vision-language-action (VLA) policies represent a key step toward unifying perception, language understanding, and action execution within a single framework. Classic policies such as RT-1 \cite{rt1}, ACT \cite{ACT}, and Octo \cite{Octo} adopt transformer-based architectures trained on diverse robot trajectories across multiple environments and tasks. More recent approaches leverage vision-language models (VLMs) pretrained on internet-scale data, adopting paradigms similar to large language models (LLMs). For instance, RT-2 \cite{RT-2} and OpenVLA \cite{openvla} treat actions as tokens and employ next-token prediction to generate robot behaviors. While these methods improve generalization and instruction-following capabilities through enhanced language understanding and visual grounding, their inference speed remains insufficient for real-time robotic systems.

To address this, dual-system architectures have been proposed. RoboDual \cite{robodual}, Fis-VLA \cite{fis-vla}, and OpenHelix \cite{openhelix} decouple reasoning and control by running a slow reasoning module alongside a fast action expert, thereby combining the strengths of both. In addition, diffusion-based methods have been explored to better capture multi-modal human demonstrations. Diffusion Policy \cite{diffusion_policy}, together with flow-based action generation methods \cite{rdt, pi0, dexvla, diffusionvla}, has become a dominant trend in recent VLA research.

Since robot action datasets remain relatively small compared to internet-scale vision-language corpora, several works have introduced auxiliary tasks to leverage large-scale image and video data. SuSIE \cite{susie} generates subgoal images through image editing, while the GR series \cite{gr1, gr2, gr3} exploit video generation pre-training for manipulation. Other methods avoid explicit future image prediction and instead generate features, such as keypoint motions \cite{atm} or latent visual embeddings \cite{univla}, to better utilize internet-scale supervision.

\textbf{Action Space.}
Open-source robot datasets \cite{oxe} contain heterogeneous action representations, posing challenges for cross-embodiment pre-training. RDT-1B \cite{rdt} directly predicts high-dimensional raw action vectors containing mixed quantities such as joint positions, end-effector poses, and deltas. $\pi_0$ \cite{pi0} learns joint-space or end-effector actions for most datasets, but a fixed index in the action vector may correspond to different physical meanings across robots (e.g., index-6 representing gripper width in a 6-DoF arm versus a wrist joint in a 7-DoF arm). DexVLA \cite{dexvla} addresses this by employing multiple action heads tailored for different embodiments.

Several works attempt to design unified action spaces for cross-embodiment learning. For example, \cite{pushing-the-limit} proposes ego-centric actions applicable to both navigation and manipulation robots, but requires manual alignment across datasets to ensure consistency in action semantics. ATM \cite{atm} predicts pixel-level motions and employs an inverse dynamics model to map them to executable actions. UniVLA \cite{univla} predicts future latent features of observations and derive task-centric action representations with a latent action model. While effective, these approaches rely on over-parameterized image-space motion representations and require training an additional inverse dynamics model for each new embodiment.

\section{Embodiment Equivariance}

\subsection{Problem Formulation}

\textbf{Definitions.} We define embodiment configuration as the combinition of its end-effector pose under base and camera pose under base:
$\mathbf{m} \in \mathcal{M} = \{[ \leftindex[]^b_e \mathbf{T},  \leftindex[]^b_c \mathbf{T} ] \}$, 
and define action as the desired end-effector pose under base:
$
    \mathbf{a} \in \mathcal{A} = \{ \leftindex[]^b_{e_*} \mathbf{T} \}
$. 
Given image observation $\mathcal{I}$ and language instruction $\mathcal{L}$, a VLA policy $\pi_\theta$ maps:
\begin{equation}
    \mathbf{a} = \pi_\theta ( \mathbf{m}, \mathcal{I}, \mathcal{L} )
\end{equation}
Let $\mathcal{G} = \{ [\leftindex[]^{b'}_b \mathbf{T}, \leftindex[]^e_{e'} \mathbf{T}] \}$ denote the embodiment transformation group that transforms the existing robot base and end-effector definitions to the new ones (Fig. \ref{fig: g_tform}). Given $\mathbf{g} \in \mathcal{G}$, we have:
\begin{equation}
    \begin{aligned}
        &\mathbf{g} \circ \mathbf{m} = [ \leftindex[]^{b'}_b \mathbf{T} \leftindex[]^b_e \mathbf{T} \leftindex[]^e_{e'} \mathbf{T}, \leftindex[]^{b'}_b \mathbf{T} \leftindex[]^b_c \mathbf{T} ] = [\leftindex[]^{b'}_{e'} \mathbf{T}, \leftindex[]^{b'}_c \mathbf{T}] \\
        &\mathbf{g} \circ \mathbf{a} = \leftindex[]^{b'}_b \mathbf{T} \leftindex[]^b_{e_*} \mathbf{T} \leftindex[]^e_{e'} \mathbf{T} = \leftindex[]^{b'}_{e'_*} \mathbf{T} 
    \end{aligned}
\end{equation}
where $\circ$ denotes the group action operator.

\begin{figure}[t]
\includegraphics[width=\linewidth]{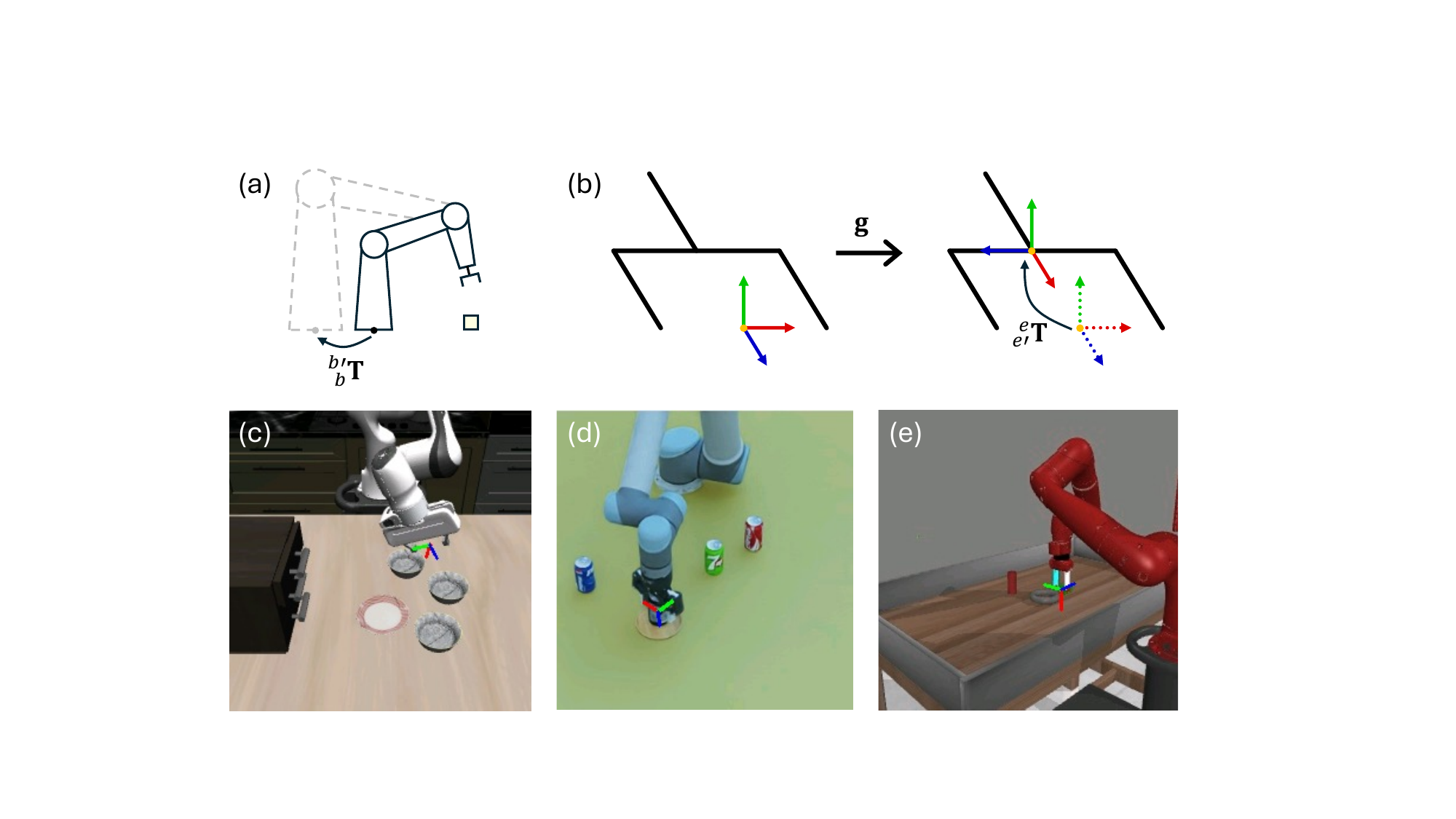}
\centering
\caption{Schematics of the pose coordinate definition and group transformation of embodiment base and end-effector. (a) Embodiment with different base definition should have similar relative pose trajectory between end-effector pose and the objects when completing similar tasks. (b) Different coordinate definitions with respect to grippers with similar topologies. Some define the origin at the center of fingertip, and some may defines it at the install link of gripper. The orientations may also be different. (c)(d)(e) Example coordinate definitions visualized from different datasets with different embodiments.}
\label{fig: g_tform}
\end{figure}

\textbf{Embodiment Equivariant Policy}.
We desire the policy to be inherently equivariant to configuration transformations rather than learning through data:
\begin{equation}
    \pi_\theta ( \mathbf{g} \circ \mathbf{m}, \mathcal{I}, \mathcal{L} ) = \mathbf{g} \circ \pi_\theta ( \mathbf{m}, \mathcal{I}, \mathcal{L} )
\end{equation}
With equivariance ensured, the policy can focus on task-level knowledge rather than embodiment-specific details, enabling more effective pre-training and more efficient fine-tuning.

\subsection{Architecture Design}
Equivariance is challenging to achieve with purely learning-based methods. To address this, we decompose the policy into two parts: a learning based policy $\phi_\theta$ that is invariant to configuration transformation and an analytical decoder $\mathcal{D}$ that is equivariant to configuration transformation:
\begin{equation}
    \pi_\theta(\mathbf{m}, \mathcal{I}, \mathcal{L}) = \mathcal{D}(\mathbf{m},  \phi_\theta(\mathbf{m}, \mathcal{I}, \mathcal{L}))
\end{equation}
where the following constraints are desired:
\begin{equation}
\label{eqn: constraints 1}
    \begin{aligned}
        & \phi_\theta(\mathbf{g} \circ \mathbf{m}, \mathcal{I}, \mathcal{L}) = \phi_\theta(\mathbf{m}, \mathcal{I}, \mathcal{L}) \\
        & \mathcal{D}(\mathbf{g} \circ \mathbf{m}, \phi_\theta ) = \mathbf{g} \circ \mathcal{D}(\mathbf{m}, \phi_\theta)
    \end{aligned}
\end{equation}

\begin{lemma}
If $\mathcal{D}$ is equivariant to $\mathcal{G}$ and $\phi_\theta$ is invariant to $\mathcal{G}$, then $\pi_\theta$ is equivariant to $\mathcal{G}$.
\end{lemma}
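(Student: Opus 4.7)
The plan is a direct chain-of-substitutions proof using the decomposition $\pi_\theta(\mathbf{m}, \mathcal{I}, \mathcal{L}) = \mathcal{D}(\mathbf{m}, \phi_\theta(\mathbf{m}, \mathcal{I}, \mathcal{L}))$ together with the two hypotheses collected in equation (\ref{eqn: constraints 1}). I would start from the left-hand side $\pi_\theta(\mathbf{g} \circ \mathbf{m}, \mathcal{I}, \mathcal{L})$, expand it via the decomposition to obtain $\mathcal{D}(\mathbf{g} \circ \mathbf{m}, \phi_\theta(\mathbf{g} \circ \mathbf{m}, \mathcal{I}, \mathcal{L}))$, and then apply the two hypotheses in turn.

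First I would invoke the invariance of $\phi_\theta$ to rewrite the second argument as $\phi_\theta(\mathbf{m}, \mathcal{I}, \mathcal{L})$, yielding $\mathcal{D}(\mathbf{g} \circ \mathbf{m}, \phi_\theta(\mathbf{m}, \mathcal{I}, \mathcal{L}))$. Next I would apply the equivariance of $\mathcal{D}$ with respect to its first slot to move the group element outside, producing $\mathbf{g} \circ \mathcal{D}(\mathbf{m}, \phi_\theta(\mathbf{m}, \mathcal{I}, \mathcal{L}))$. Folding the decomposition back in the reverse direction then identifies this expression with $\mathbf{g} \circ \pi_\theta(\mathbf{m}, \mathcal{I}, \mathcal{L})$, which is precisely the desired equivariance relation.

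There is essentially no hard step: the lemma is a compositional identity asserting that an invariant feature extractor followed by an equivariant decoder yields an equivariant composite, and both substitutions are licensed verbatim by the assumed constraints. The only subtlety worth explicitly flagging is that $\phi_\theta$ appears both as the second argument of $\mathcal{D}$ and inside the expression being transformed, so I would be careful to apply invariance before applying equivariance, to ensure the argument of $\mathcal{D}$ matches the form required by its equivariance hypothesis (which is stated for a fixed feature input). Beyond that bookkeeping, the proof is a three-line calculation and no auxiliary machinery or case analysis should be needed.
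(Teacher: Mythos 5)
Your proposal is correct and matches the paper's argument: the paper's own proof of this lemma is just a one-line appeal to the constraints in Eq.~\ref{eqn: constraints 1}, and the explicit substitution chain you give (expand $\pi_\theta$, apply invariance of $\phi_\theta$, then equivariance of $\mathcal{D}$, then fold back) is exactly the computation the paper later spells out when proving its Theorem 1. Your remark about applying invariance before equivariance is sound bookkeeping and introduces no deviation from the paper's route.
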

\begin{proof}
    This result follows directly from the constraints in Eq. \ref{eqn: constraints 1}.
\end{proof}
\vspace{0.8em}

A naive design that satisfies these constraints is:
\begin{equation}
\label{eq: D design}
    \begin{aligned}
        & \phi_\theta( \mathbf{m}, \mathcal{I}, \mathcal{L}) = \phi_\theta(\mathcal{I}, \mathcal{L}) \\
        & \mathcal{D}( [ \leftindex[]^b_c \mathbf{T}, \leftindex[]^b_{e} \mathbf{T}], \phi_\theta ) = \leftindex[]^b_c \mathbf{T} \ \phi_\theta \leftindex[]^b_c \mathbf{T}^{-1} \leftindex[]^b_{e} \mathbf{T}
    \end{aligned}
\end{equation}

\begin{theorem}
    If the network and decoder satisfy Eq. \ref{eq: D design}, the policy is equivariant to configuration transformations.
\end{theorem}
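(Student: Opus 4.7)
The plan is to invoke the preceding lemma, which already does most of the heavy lifting: once we verify that the specific $\phi_\theta$ and $\mathcal{D}$ prescribed in Eq.~\ref{eq: D design} satisfy the two individual conditions of Eq.~\ref{eqn: constraints 1} (invariance of $\phi_\theta$ and equivariance of $\mathcal{D}$), equivariance of $\pi_\theta = \mathcal{D}(\mathbf{m}, \phi_\theta)$ follows immediately. Since $\phi_\theta(\mathbf{m}, \mathcal{I}, \mathcal{L}) = \phi_\theta(\mathcal{I}, \mathcal{L})$ by construction, the argument $\mathbf{m}$ is syntactically absent, so $\phi_\theta(\mathbf{g} \circ \mathbf{m}, \mathcal{I}, \mathcal{L}) = \phi_\theta(\mathcal{I}, \mathcal{L})$ trivially, establishing the invariance half.

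The substantive step is verifying $\mathcal{D}(\mathbf{g}\circ\mathbf{m}, \phi_\theta) = \mathbf{g}\circ\mathcal{D}(\mathbf{m},\phi_\theta)$. I would expand $\mathbf{g}\circ\mathbf{m} = [\leftindex[]^{b'}_{e'}\mathbf{T}, \leftindex[]^{b'}_c\mathbf{T}]$ using the group action defined in the problem formulation, substitute into the decoder template, and then factor each transformed pose through its original form: $\leftindex[]^{b'}_c\mathbf{T} = \leftindex[]^{b'}_b\mathbf{T}\,\leftindex[]^b_c\mathbf{T}$, with inverse $\leftindex[]^b_c\mathbf{T}^{-1}\,\leftindex[]^{b'}_b\mathbf{T}^{-1}$, and $\leftindex[]^{b'}_{e'}\mathbf{T} = \leftindex[]^{b'}_b\mathbf{T}\,\leftindex[]^b_e\mathbf{T}\,\leftindex[]^e_{e'}\mathbf{T}$. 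Substituting these into $\leftindex[]^{b'}_c\mathbf{T}\,\phi_\theta\,\leftindex[]^{b'}_c\mathbf{T}^{-1}\,\leftindex[]^{b'}_{e'}\mathbf{T}$, the central pair $\leftindex[]^{b'}_b\mathbf{T}^{-1}\leftindex[]^{b'}_b\mathbf{T}$ telescopes to the identity, leaving $\leftindex[]^{b'}_b\mathbf{T}\bigl(\leftindex[]^b_c\mathbf{T}\,\phi_\theta\,\leftindex[]^b_c\mathbf{T}^{-1}\,\leftindex[]^b_e\mathbf{T}\bigr)\leftindex[]^e_{e'}\mathbf{T}$, which by definition equals $\leftindex[]^{b'}_b\mathbf{T}\,\mathcal{D}(\mathbf{m},\phi_\theta)\,\leftindex[]^e_{e'}\mathbf{T} = \mathbf{g}\circ\mathcal{D}(\mathbf{m},\phi_\theta)$.

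Honestly, no step here is a genuine obstacle; the result is essentially bookkeeping in $\mathrm{SE}(3)$. The only point worth highlighting in the write-up is \emph{why} the decoder template in Eq.~\ref{eq: D design} has this particular structure: the conjugation $\leftindex[]^b_c\mathbf{T}\,\phi_\theta\,\leftindex[]^b_c\mathbf{T}^{-1}$ encodes that $\phi_\theta$ is interpreted as a rigid motion expressed in the camera frame, so the base transform $\leftindex[]^{b'}_b\mathbf{T}$ must appear and cancel symmetrically on both sides, while the trailing factor $\leftindex[]^b_e\mathbf{T}$ is the anchor through which the end-effector change-of-frame $\leftindex[]^e_{e'}\mathbf{T}$ gets absorbed on the right. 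I would state the two frame-composition cancellations inline as a short displayed equation and then conclude by citing the lemma, keeping the proof to a few lines.
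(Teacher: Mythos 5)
Your proposal is correct and follows essentially the same route as the paper: verify the invariance of $\phi_\theta$ (trivial, since $\mathbf{m}$ is discarded), verify decoder equivariance by factoring $\leftindex[]^{b'}_c\mathbf{T}$ and $\leftindex[]^{b'}_{e'}\mathbf{T}$ through $\leftindex[]^{b'}_b\mathbf{T}$ so the base transforms cancel, and then conclude via the composition argument of the lemma. Your write-up is in fact slightly more explicit than the paper's about the telescoping step, but there is no substantive difference.
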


\begin{proof}
    In Eq. \ref{eq: D design}, $\phi_\theta( \mathbf{m}, \mathcal{I}, \mathcal{L}) = \phi_\theta(\mathcal{I}, \mathcal{L})$ means the neural network discards all the embodiment information, therefore $\phi_\theta$ is obviously invariant to $\mathcal{G}$. For decoder $\mathcal{D}$, we have:
    \begin{equation}
        \begin{aligned}
            \mathcal{D}(\mathbf{g} \circ \mathbf{m}, \phi_\theta) &= \leftindex[]^{b'}_c \mathbf{T} \ \phi_\theta \leftindex[]^{b'}_{c'} \mathbf{T}^{-1} \leftindex[]^{b'}_{e} \mathbf{T} \\
            &= \leftindex[]^{b'}_b \mathbf{T} (\leftindex[]^b_c \mathbf{T} \ \phi_\theta \leftindex[]^b_c \mathbf{T}^{-1} \leftindex[]^b_{e} \mathbf{T}) \leftindex[]^{e}_{e'} \mathbf{T} \\
            &= \mathbf{g} \circ \mathcal{D}(\mathbf{m}, \phi_\theta)
        \end{aligned}
    \end{equation}
    Therefore, $\mathcal{D}$ is equivariant to $\mathcal{G}$. Finally, we have:
    \begin{equation}
        \begin{aligned}
            \pi_\theta(\mathbf{g} \circ \mathbf{m}, \mathcal{I}, \mathcal{L}) &= \mathcal{D}(\mathbf{g} \circ \mathbf{m}, \phi_\theta( \mathbf{g} \circ \mathbf{m}, \mathcal{I}, \mathcal{L} )) \\
            &= \mathbf{g} \circ \mathcal{D}(\mathbf{m}, \phi_\theta(\mathbf{m}, \mathcal{I}, \mathcal{L})) \\
            &= \mathbf{g} \circ \pi_\theta(\mathbf{m}, \mathcal{I}, \mathcal{L})
        \end{aligned}
    \end{equation}
    Therefore, the policy $\pi_\theta$ is equivariant to $\mathcal{G}$.
\end{proof}

\vspace{0.8em}
Note discarding all the embodiment configuration in $\phi_\theta$ is a naive design that yields poor performance. We will discuss how to preserve embodiment configurations while maintaining invariance in section \ref{sec: geo util}. Given the decoder design in Eq. \ref{eq: D design}, we would see the learning objective $\mathbf{y}$ of $\phi_\theta$ is:
\begin{equation}
    \phi_\theta \rightarrow \mathbf{y} = \leftindex[]^b_c \mathbf{T}^{-1} \leftindex[]^b_{e_*} \mathbf{T} \leftindex[]^b_{e} \mathbf{T}^{-1} \leftindex[]^b_c \mathbf{T} = \leftindex[]^c_{e_*} \mathbf{T} \leftindex[]^e_c \mathbf{T}
\end{equation}
It is reasonable to expect a configuration invariant policy $\phi_\theta$ to predict an also configuration invariant objective $\mathbf{y}$.

\subsection{Trade-off for Positioning Precision}

In real-world dataset, the end-effector poses are reliable as they are directly read from robot SDKs. However, the cameras are often not well calibrated, introducing errors in camera extrinsic. While the decoder design in Eq. \ref{eq: D design} achieves full equivariance, its learning objective $\mathbf{y}$ is sensitive to such calibration errors, especially in the translation components. We could reformulate the translation component of the original learning objective to:
\begin{equation}
\label{eq: yt}
    \mathbf{y}^t = \leftindex[]^c_{e_*} \mathbf{R} \leftindex[]^e_c \mathbf{t} + \leftindex[]^c_{e_*} \mathbf{t} = \leftindex[]^c_e \mathbf{R} ( \leftindex[]^e_{e_*} \mathbf{R} - \mathbf{I} ) \leftindex[]^e_c \mathbf{t} + \leftindex[]^c_e \mathbf{R} \leftindex[]^e_{e_*} \mathbf{t}
\end{equation}
In Eq. \ref{eq: yt}, we decompose $\mathbf{y}_t$ into relative end-effector pose related components (accurate $\leftindex[]^e_{e_*} \mathbf{R}$ and $\leftindex[]^e_{e_*} \mathbf{t}$) and camera extrinsic related components (inaccurate $\leftindex[]^c_e \mathbf{R}$ and $\leftindex[]^e_c \mathbf{t}$). We notice that in most cases we have $\norm{\leftindex[]^e_c \mathbf{t}} > \norm{ \leftindex[]^e_{e_*} \mathbf{t} }$, and the extrinsic calibration error will be amplified by $\norm{\leftindex[]^e_c \mathbf{t}}$, yielding large translation error which may overwhelm the norm of ground truth learning objective $\mathbf{y}^t$ itself and make the pre-training less effective. Therefore, we remove the $\leftindex[]^e_c \mathbf{t}$ related item, and refine the decoder and the corresponding learning objective to:
\begin{equation}
\label{eq: D design stable}
\begin{aligned}
    &\mathcal{D}_r( [ \leftindex[]^b_c \mathbf{T}, \leftindex[]^b_{e} \mathbf{T}], \phi_\theta ) = \begin{bmatrix}
        \leftindex[]^b_c \mathbf{R} \phi_\theta^R \leftindex[]^b_c \mathbf{R}^{-1} \leftindex[]^b_{e} \mathbf{R} & \leftindex[]^b_c \mathbf{R} \phi_\theta^t + \leftindex[]^b_e \mathbf{t} \\
        \mathbf{0} & 1
    \end{bmatrix} \\
    &\phi_\theta \rightarrow \mathbf{y}_r = \begin{bmatrix}
        \leftindex[]^c_{e_*} \mathbf{R} \leftindex[]^e_c \mathbf{R} & \leftindex[]^c_e \mathbf{R} \leftindex[]^e_{e_*} \mathbf{t} \\
        \mathbf{0} & 1
    \end{bmatrix}
\end{aligned}
\end{equation}
The revised decoder in Eq. \ref{eq: D design stable} remains equivariant to $\mathcal{G}$ except for pure end-effector translations transformation group $\mathcal{T}=\{ \leftindex[]^e_{e'} \mathbf{t} \}$, but achieves much greater robustness in practice. All subsequent experiments are based on this revised design.

\subsection{Discussion of Existing Policy Learning}

Beyond our proposed formulation, several alternative action spaces are commonly used in VLA policies:
\begin{equation}
    \begin{aligned}
        \text{BE:} \quad & \phi_\theta(\mathbf{m, \mathcal{I}, \mathcal{L}}) \rightarrow \leftindex[]^b_{e_*} \mathbf{T}, &&\mathcal{D}(\mathbf{m}, \phi_\theta) = \phi_\theta \\
        \text{EE:} \quad & \phi_\theta(\mathbf{m, \mathcal{I}, \mathcal{L}}) \rightarrow \leftindex[]^e_{e_*} \mathbf{T}, &&\mathcal{D}(\mathbf{m}, \phi_\theta) = \leftindex[]^b_e \mathbf{T} \ \phi_\theta  \\
        \text{CE:} \quad & \phi_\theta(\mathbf{m, \mathcal{I}, \mathcal{L}}) \rightarrow \leftindex[]^c_{e_*} \mathbf{T}, &&\mathcal{D}(\mathbf{m}, \phi_\theta) = \leftindex[]^b_c \mathbf{T} \ \phi_\theta \\ 
        \text{IM:} \quad & \phi_\theta(\mathbf{m, \mathcal{I}, \mathcal{L}}) \rightarrow \mathcal{I}_*, &&\mathcal{D}(\mathbf{m}, \phi_\theta) = \psi_\theta(\mathcal{I}, \phi_\theta)
    \end{aligned}
\end{equation}

Denoting $\mathcal{G}_b = \{ \leftindex[]^{b'}_b \mathbf{T} \}$ and $\mathcal{G}_e = \{ \leftindex[]^e_{e'} \mathbf{T} \}$ separately as base transformation group and end-effector transformation group, we could draw the following conclusions:

\noindent \textbf{1)} BE predicts the absolute end-effector pose in the base frame. The decoder is invariant to $\mathcal{G}$, forcing the policy to learn equivariance through data.

\noindent \textbf{2)} EE predicts the relative end-effector pose, which is adopted by OpenVLA \cite{openvla} and $\pi_0$ \cite{pi0} when pre-training on a subset of OXE dataset \cite{oxe}. The decoder is equivariant to $\mathcal{G}_b$, however, is neither equivariant nor invariant to $\mathcal{G}_e$. Therefore, the policy is not equivariant to $\mathcal{G}$.

\noindent \textbf{3)} CE predicts the end-effector pose in camera frame, which is adopted by RISE \cite{rise}. The decoder is equivariant to $\mathcal{G}_b$ and invariant to $\mathcal{G}_e$, which requires the $\phi_\theta$ to be invariant to $\mathcal{G}_b$ and equivariant to $\mathcal{G}_e$. While the former can be easily achieved by reformulating $\mathbf{m}$ to $\leftindex[]^c _e \mathbf{T}$, the latter is not guaranteed by network structure but learning through data.

\noindent \textbf{4)} IM predicts the dense/sparse future image space features (e.g., keypoints in ATM \cite{atm} or latent embeddings in UniVLA \cite{univla}). The encoder is invariant to $\mathcal{G}$, however, the decoder is a neural network without guaranteed equivariance.

These comparisons highlight the limitations of existing formulations and motivate our design of a theoretically grounded, embodiment transformation equivariant policy.

\begin{figure}[t]
\includegraphics[width=0.8\linewidth]{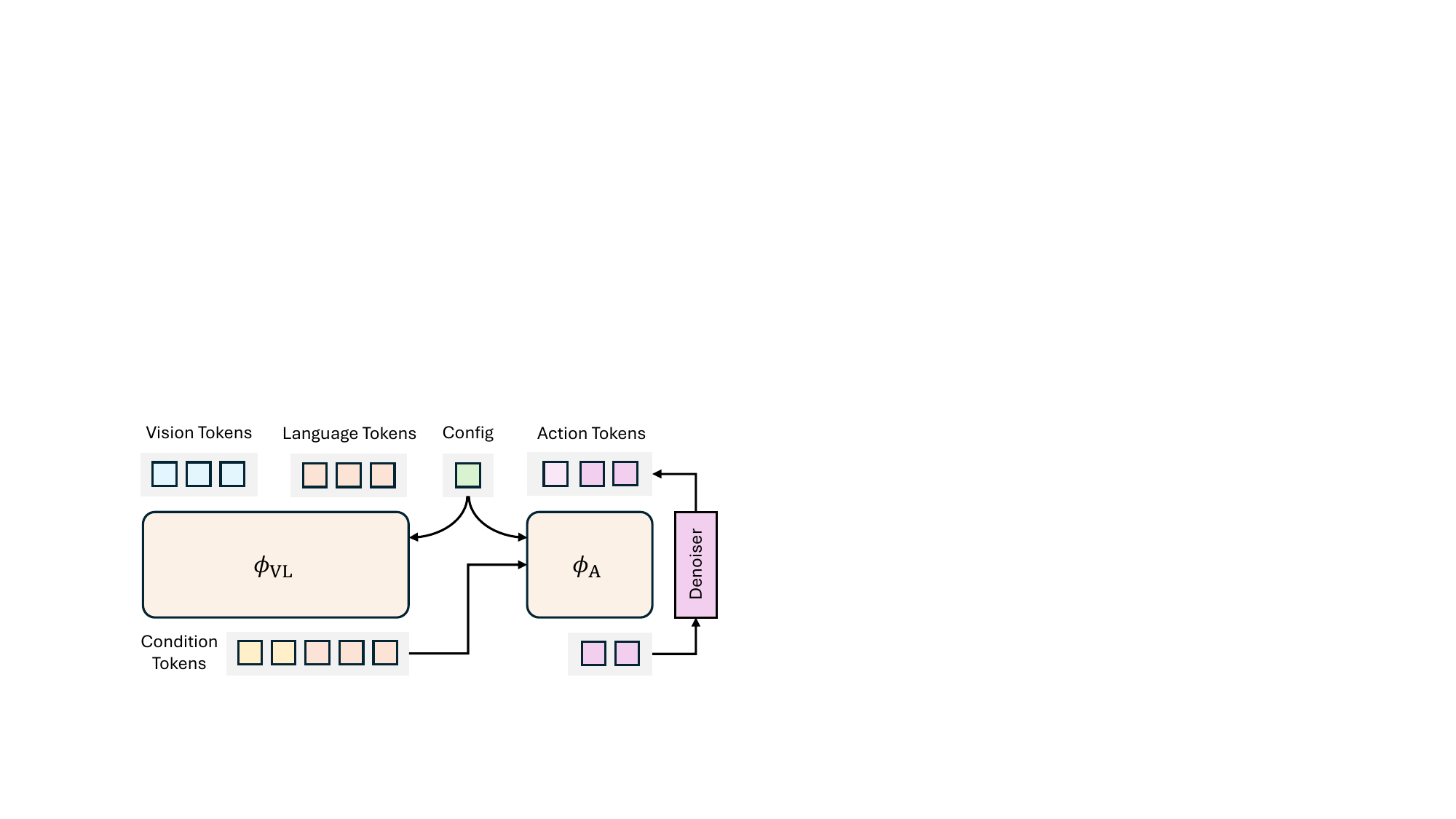}
\centering
\caption{Schematics of model $\phi$. Model $\phi$ is composed of $\phi_\text{VL}$ and $\phi_\text{A}$. $\phi_\text{VL}$ generate geometric-aware and task-related vision-language condition tokens. $\phi_\text{A}$ generate actions with conditions from $\phi_\text{VL}$.}
\label{fig: phi}
\end{figure}

\section{Geometry Utilization}
\label{sec: geo util}

In the previous design (Eq. \ref{eq: D design}), we discard all the embodiment configurations information in neural network modeling. However, learning 3D geometry solely from multi-view images is difficult. Embodiment configurations provide valuable cues for 3D understanding, which can improve positioning precision and robustness to viewpoint variations. Directly injecting camera extrinsics or embodiment poses as tokens, however, breaks the required invariance. We therefore seek strategies to incorporate geometric information while preserving invariance to configuration transformations.

\subsection{Relative Camera Pose Embedding}
Following GTA \cite{gta} and PRoPE \cite{prope}, we embed camera intrinsic and relative camera pose into the attention mechanism of visual tokens (feature dimension $=d$):
\begin{equation}
\begin{aligned}
    & [\mathbf{q}_i; \mathbf{k}_i; \mathbf{v}_i] = [\mathbf{W}_q; \mathbf{W}_k; \mathbf{W}_v](\mathbf{c}_i + \mathbf{W}_p \mathbf{p}_i) \\
    &\mathbf{q}_i' = \sigma(\leftindex[]^{c}_{b} \mathbf{T}_i^\top) \mathbf{q}_i, \ \mathbf{k}_i' = \sigma(\leftindex[]^b_{c} \mathbf{T}_i) \mathbf{k}_i, \ \mathbf{v}_i' = \sigma(\leftindex[]^b_{c} \mathbf{T}_i) \mathbf{v}_i \\
    &\mathbf{o}_i = \sigma(\leftindex[]^{c}_{b} \mathbf{T}_i) \sum_j \frac{\exp(\mathbf{q}_i' {}^\top \mathbf{k}_j' / \sqrt{d}) }{ \sum_k \exp( \mathbf{q}_i' {}^\top \mathbf{k}_k' / \sqrt{d} ) } \mathbf{v}_j'
\end{aligned}
\end{equation}
where $\sigma(\mathbf{T})$ compose the pose matrix $\mathbf{T}$ for $d/4$ times to build a large block-diagonal matrix:
\begin{equation}
    \sigma(\mathbf{T}) = \begin{bmatrix}
        \mathbf{T} & & \\
        & \ddots & \\
        & & \mathbf{T}
    \end{bmatrix}_{d\times d}
\end{equation}
$\mathbf{p}_i$ is the normalized camera place coordinates of $i$-th image token $\mathbf{c}_i$, and $\leftindex[]^b_{c} \mathbf{T}_i$ is its associated camera pose. 
We adopt the additive positional embedding for intrinsic and rotary positional embedding for extrinsic. This design allows geometry-aware self-attention without violating invariance constraints.

\subsection{Actions as Positional Embedding}
We generate actions with diffusion. At each diffusion timestep $k$, we add positional embedding to the noisy action tokens when cross attending to condition tokens:
\begin{equation}
\begin{aligned}
    \mathbf{q}^a &= \mathbf{W}_q ( \mathbf{W}_y \hat{\mathbf{y}}_k + \mathbf{W}_p \mathbf{p}^a_k) \\
    \mathbf{p}^a_k &= \mathcal{P}(\leftindex[]^b_c \mathbf{T}^{-1} \ \mathcal{D}(\mathbf{m}, \hat{\mathbf{y}}_k)) \\
    \mathbf{o}^a &= \mathrm{scale\_dot\_product}(\mathbf{q}^a, \mathbf{k}, \mathbf{v})
\end{aligned}
\end{equation}
where $\mathcal{P}$ projects the 3D position of future noisy action to the image space.
By converting embodiment configurations into positional encodings in image space, projected future actions provide feedback during recurrent denoising. This explicitly links actions to image patch coordinates, thereby improving positioning precision.

\begin{figure}[t]
\includegraphics[width=0.8\linewidth]{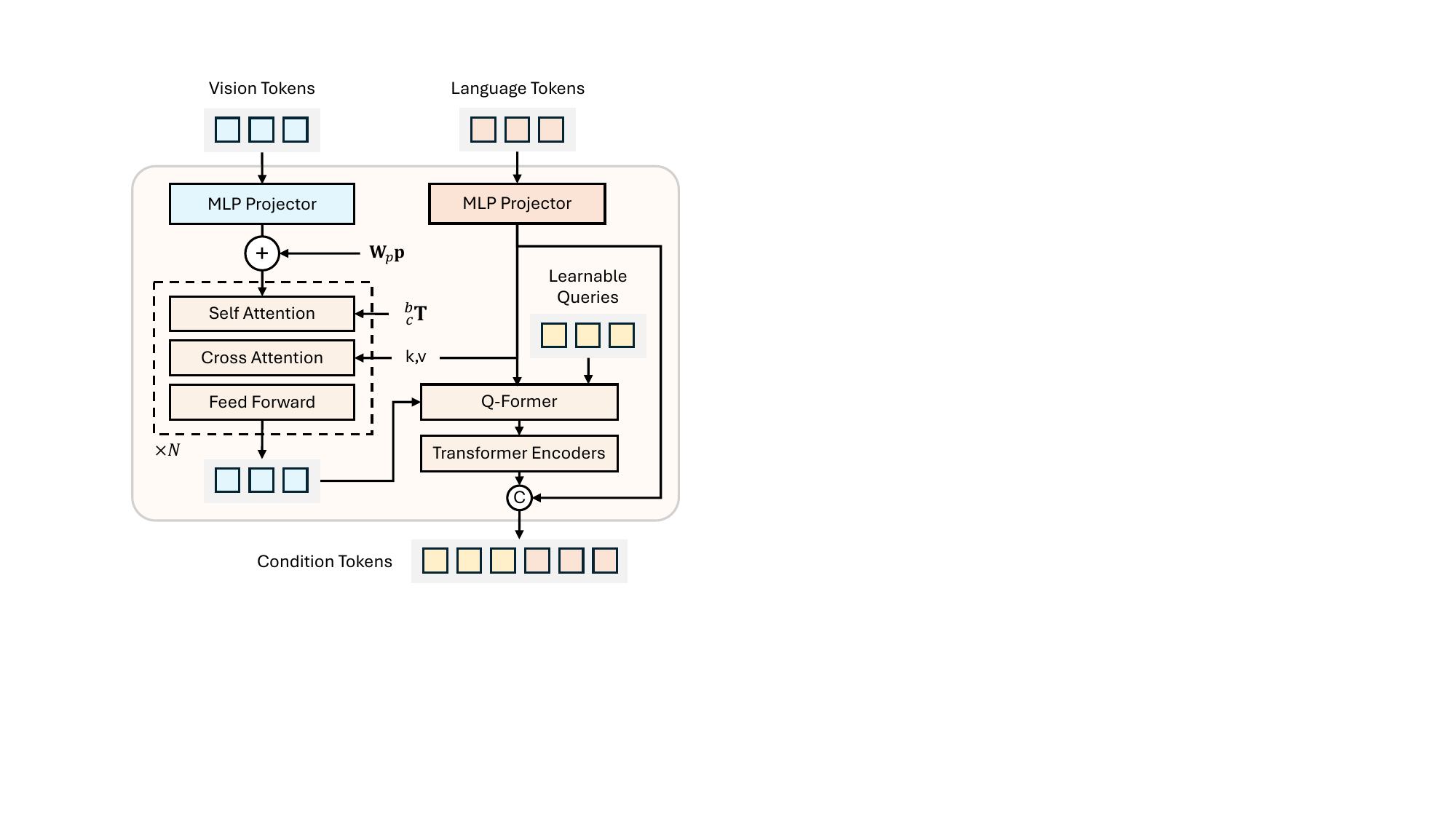}
\centering
\caption{Schematics of model $\phi_\text{VL}$. The initial vision tokens and language tokens are extracted from fronzen DINOv2 and SigLIP models. Image self attention with relative camera pose embedding is adopted to enhance geometry understanding. Image-language cross attention is adopted to enhance instruction following. Q-Former is adopted to compress task-related vision features for efficient action generation.}
\label{fig: phi_VL}
\end{figure}

\begin{figure}[t]
\includegraphics[width=0.8\linewidth]{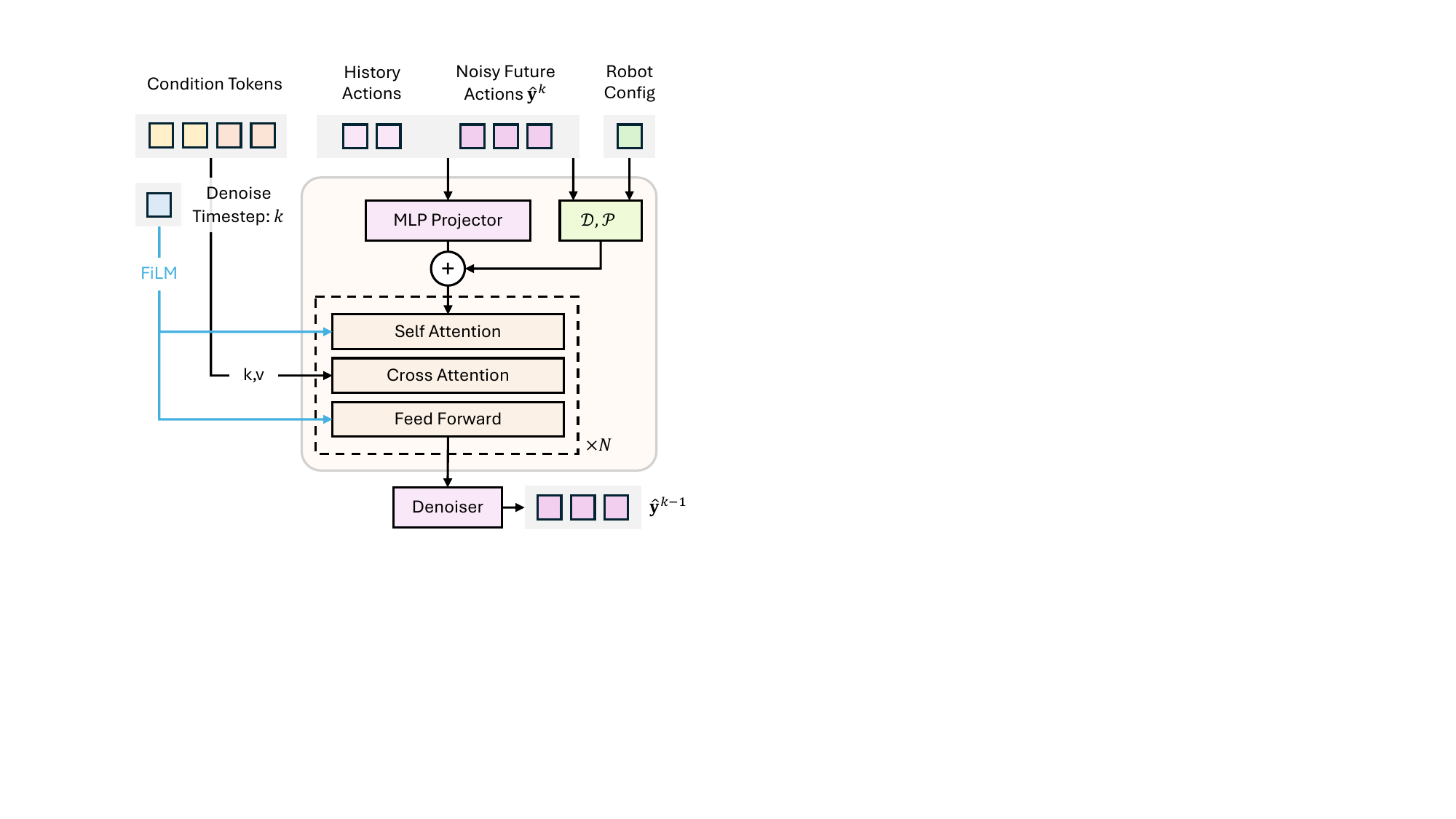}
\centering
\caption{Schematics of model $\phi_\text{A}$. We use cross attention to inject vision-language conditions and FiLM to inject denoise timestep. }
\label{fig: phi_A}
\end{figure}

\subsection{Network Structure}
Following standard VLA policy design, the policy network is decomposed into a vision-language encoder $\phi_\text{VL}$ and an action generator $\phi_\text{A}$ conditioned on $\phi_\text{VL}$.

In $\phi_\text{VL}$, we fuse the DINOv2 \cite{dinov2} features and SigLIP \cite{siglip} image features with a MLP, followed by repeated image self-attention and image-text cross-attention.
Relative camera pose embedding is injected into all the self-attention layers of vision features. A one-layer Q-Former \cite{instructblip} with 64 learnable queries is used to compress task-relevant features for action generation.

In $\phi_\text{A}$, action tokens are processed by self-attention and cross-attention with $\phi_\text{VL}$ outputs. Action positional embeddings are injected once before the first self-attention layer.
We use FiLM \cite{film} to inject denoise time step $k$ into self-attention layers and feed forward layers. The final denoised learning objective $\hat{\mathbf{y}}_0$ is decoded to $\leftindex[]^b_{e_*} \hat{\mathbf{T}}$ with $\mathcal{D}$ for execution.

Excluding the frozen vision-language backbones, our action expert consists total 122M trainable parameters, which is approximately 1/3 of the action expert size in $\pi_0$ \cite{pi0}.

\subsection{Implementation Details}

\textbf{Dataset.} We pre-train our policy on a real-world dataset (Droid \cite{droid}) and several simulation datasets (ManiSkill \cite{maniskill}, Meta-world \cite{metaworld} and self collected simulation data using IsaacSim). The policy is fine-tuned on LIBERO \cite{libero} for simulated benchmark, and self collected datasets for real-world evaluation. Details of datasets are listed in Table. \ref{tab: datasets}.

\begin{table}[tb]
\centering
\caption{Datasets used in pre-training and fine-tuning. Datasets with notation * are self-collected. The LIBERO dataset are regenerated with original action replayed and failed demonstrations removed.}

\resizebox{\linewidth}{!}{

\begin{tabular}{lllll}
\toprule
Stage                                                                              & Dataset             & Episodes & Cameras            & Robot                   \\ \midrule
\multirow{6}{*}{Pre-training}                                                      & Droid               & 78,544   & 3                  & Franka                  \\
                                                                                   & Maniskill           & 30,123   & 2                  & Franka                  \\
                                                                                   & Meta-world          & 2,500    & 3 of 7             & Sawyer                  \\
                                                                                   & Pick Place Cans*     & 2,000    & 2                  & UR5                     \\
                                                                                   & Open Drawer*         & 2,000    & 1                  & UR5                     \\
                                                                                   & Open Microwave*      & 2,000    & 1                  & UR5                     \\ \midrule
\multirow{4}{*}{\begin{tabular}[c]{@{}l@{}}Fine-tuning,\\ Simulation\end{tabular}}  & LIBERO-Spatial      & 438      & \multirow{4}{*}{2} & \multirow{4}{*}{Franka} \\
                                                                                   & LIBERO-Object       & 452      &                    &                         \\
                                                                                   & LIBERO-Goal         & 421      &                    &                         \\
                                                                                   & LIBERO-Long         & 376      &                    &                         \\ \midrule
\multirow{4}{*}{\begin{tabular}[c]{@{}l@{}}Fine-tuning,\\ Real-world\end{tabular}} & Pick Place Pepper*   & 10       & \multirow{4}{*}{2} & \multirow{4}{*}{Piper}  \\
                                                                                   & Pick Place Carrot*   & 10       &                    &                         \\
                                                                                   & Pick Place Eggplant* & 10       &                    &                         \\
                                                                                   & Table Storage*       & 30       &                    &                         \\ \bottomrule
\end{tabular}
}
\label{tab: datasets}
\end{table}

\textbf{Training.} We use AdamW as the optimizer with learning rate 1e-4 and weight decay 1e-2. We use linear learning rate warmup for 20k iterations in pre-training and 2k iterations in fine-tuning. We train all the models with batch size 32 in mixed precision (float32 and bfloat16). Pre-training runs for 600k iterations, which costs around 55 hours on single RTX 4090. We use 100-step DDPM in training and 20-step DDIM in inference.

\textbf{Learning Objective.} As shown in Eq. \ref{eq: D design stable}, the learning objective $\hat{\mathbf{y}}_r $ of policy $\phi_\theta$ lies on the $\mathrm{SE}(3)$ manifold. We use 6D representation \cite{rot6d} of rotation component. We additionally regress the normalized gripper width ($-1 =$ fully closed and $1 =$ fully open). The total action dimension is 10 (3 for translation, 6 for 6D rotation and 1 for gripper).

\begin{figure*}[t]
\flushright
\includegraphics[width=0.97\linewidth]{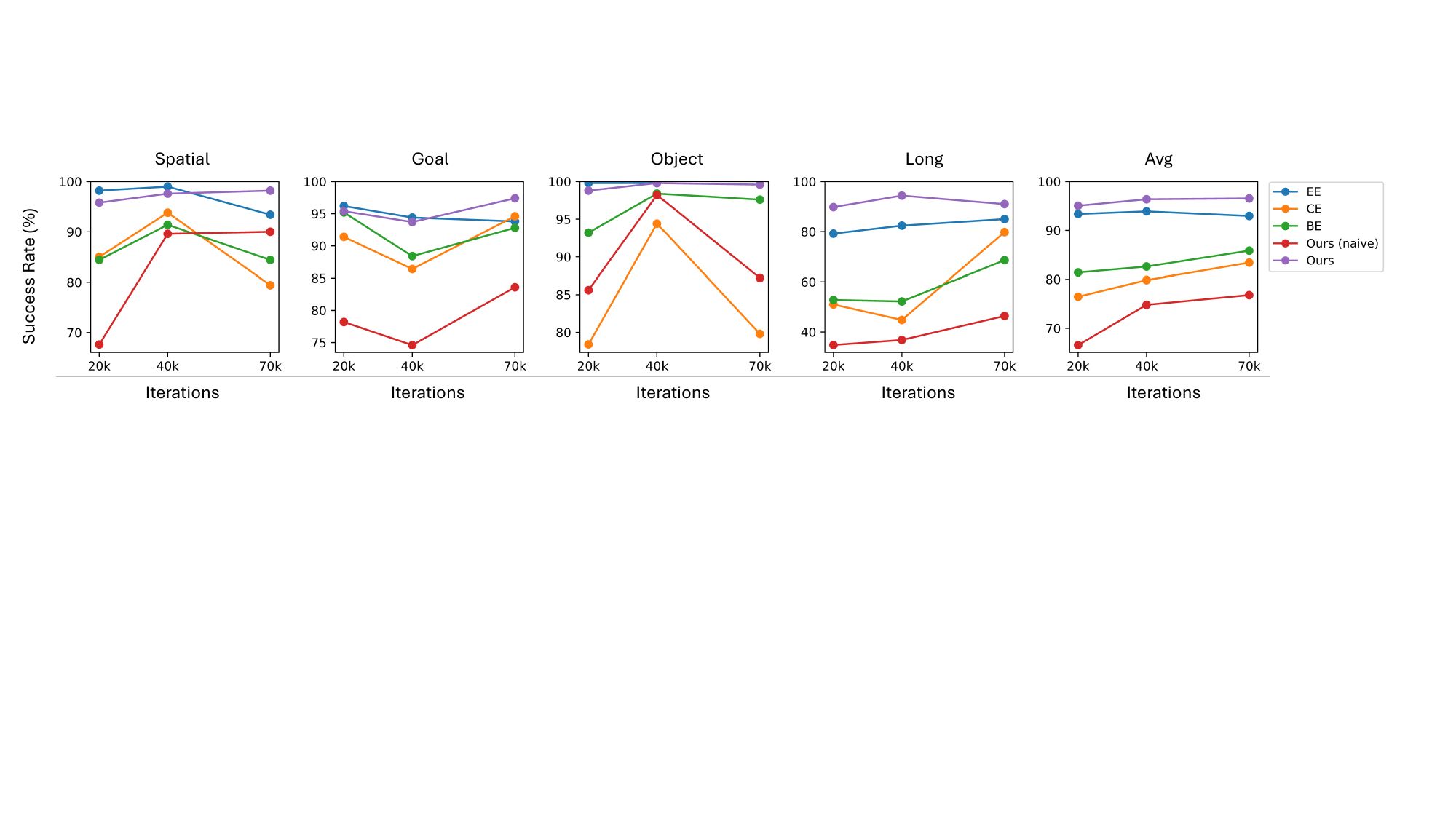}
\caption{Performance of policies with different action space on LIBERO benchmark. We evaluate policies fine-tuned for 20k, 40k and 70k iterations. Our method achieves the highest success rate on the average. All the methods except for the naive version of our implementation utilize the geometric information metioned in section \ref{sec: geo util}, which shows the embodiment configuration information is important to the performance.}
\label{fig: libero metrics}
\end{figure*}

\section{Simulation Experiments}

\subsection{Performance on LIBERO Benchmark}
After pre-training, we fine-tune our model on LIBERO \cite{libero} simulation benchmarks, including LIBERO-(Spatial, Goal, Object and Long). 
Since the original datasets lack camera extrinsics and intrinsics, we follow OpenVLA \cite{openvla} to replay the ground truth actions and record this information. Failed demonstrations are removed before fine-tuning.
We compare our fine-tuned policy with recent VLA baselines. Results in Table. \ref{tab: libero} show our policy achieves competitive or superior performance while requiring substantially lower fine-tuning cost due to more effective pre-training.
Specifically, our policy reaches its reported results with only 40k iterations (less than 5 GPU hours) fine-tuning, whereas $\pi_0$ \cite{pi0} requires $\sim$34 GPU hours, and OpenVLA \cite{openvla} / UniVLA \cite{univla} require over 300 GPU hours.

\begin{table}[tb]
\centering
\caption{Success rate (\%) and total parameters (B) of fine-tuned policies on LIBERO benchmark.}
\begin{tabular}{lllllll}
\toprule
Method  & Spatial & Goal & Object & Long & Avg & Param   \\ \midrule
OpenVLA & 84.7    & 88.4 & 79.2   & 53.7 & 76.5 & 7  \\
$\pi_0$ & 96.8    & \textbf{98.8} & 95.8   & 85.2 & 94.15 & 2.3 \\
UniVLA  & 96.5    & 96.8 & 95.6   & 92.0 & 95.2 & 8  \\
Ours    & \textbf{97.6}    & 93.7 & \textbf{99.8}   & \textbf{94.4} & \textbf{96.38} & \textbf{0.4} \\ \bottomrule
\end{tabular}
\label{tab: libero}
\end{table}

\subsection{Ablation Study of Different Action Space Design}

We also pre-train three policies with the same neural network structure as ours but with different decoder and learning objectives. Success rates versus fine-tuning iterations are shown in Fig. \ref{fig: libero metrics}.

These results highlight that our action space yields the most efficient fine-tuning, since it preserves consistent action representation across tasks regardless of embodiment configurations.
We notice EE action space and ours are both in the relative action space, achieving higher success rate than the absolute action space (BE and CE). 
This is because relative action spaces produce approximately zero-centered action distributions, whereas absolute spaces suffer from mean-shift when transferring across datasets, making fine-tuning harder and reducing positioning precision. We also observe that the naive variant of our method (discarding all embodiment configuration information) performs worst, indicating that embodiment information is essential for trajectory positioning precision.

\begin{figure}[t]
\includegraphics[width=0.8\linewidth]{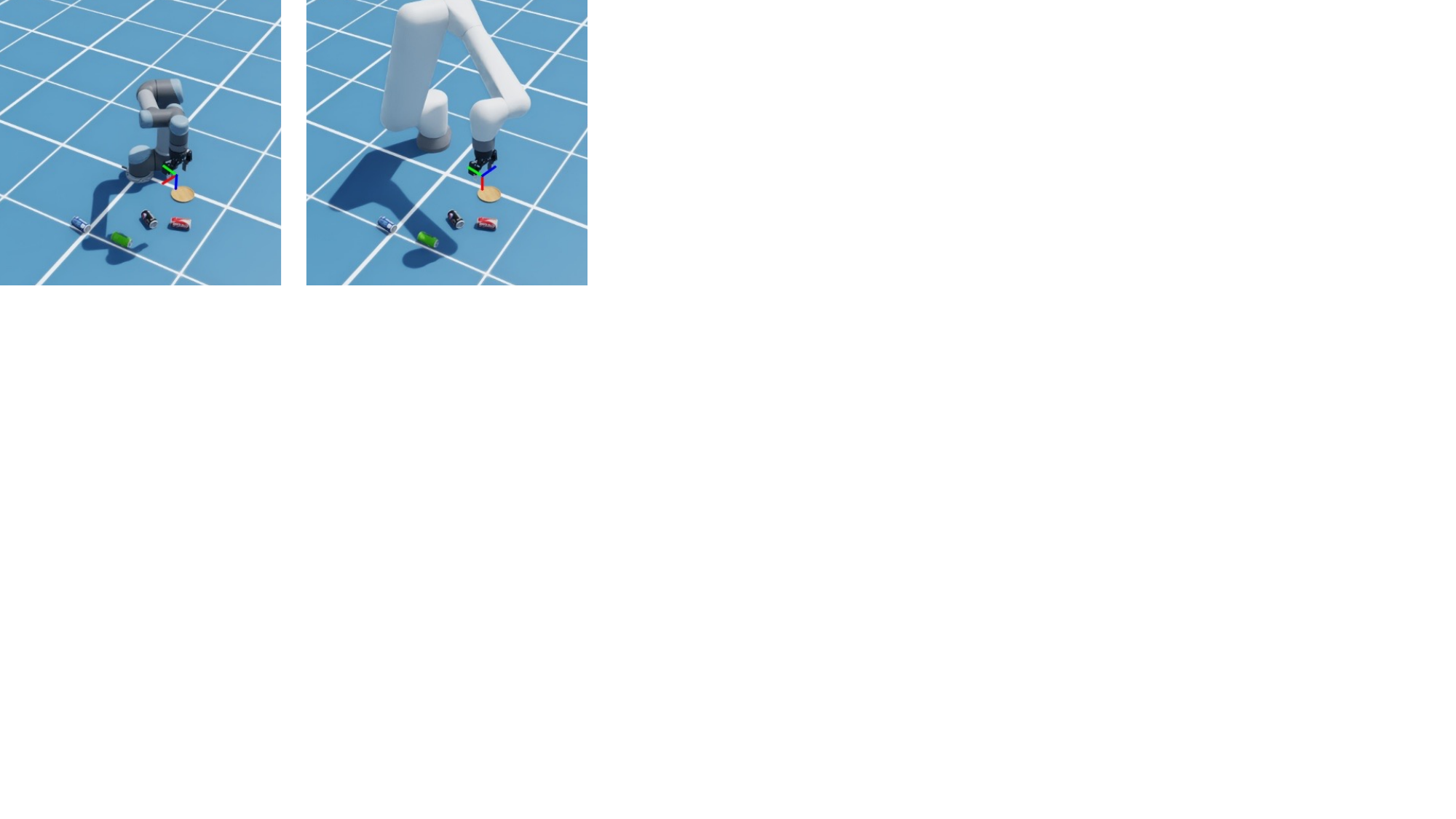}
\centering
\caption{Pick place task in simulation. The policies are first fine-tuned with sufficient data (1000 demonstrations) generated using UR5 (left), and zero-shot evaluated on the Fanuc (right), and finally fine-tuned with little data (50 demonstrations) generated using Fanuc (right). We shift both the robot base and end-effector definitions. The pose of third-person camera is randomized in demonstration and evaluation.}
\label{fig: ur5 to fanuc}
\end{figure}

\subsection{Zero/Few-shot to New Embodiment Configuration}
We further evaluate cross-embodiment generalization in a simulated pick-place task. We first fine-tune all the policies to reach their best performance with sufficient demonstrations collected by UR5. We then replace the embodiment from UR5 to Fanuc (Fig. \ref{fig: ur5 to fanuc}), which is never seen in the pre-training stage of our policy and $\pi_0$. We also shift the base pose and rotate the definition of end-effector coordinates. We measure zero-shot and few-shot performance on this new embodiment.

\begin{table}[tb]
\centering
\caption{Zero-shot and few-shot results on pick place task with a new embodiment. Our method achieves consistent success rate when transferring to the new embodiment.}
\begin{tabular}{llllll}
\toprule
Configuration     & $\pi_0$ & BE & CE & EE & Ours \\ \midrule
UR5 (original)    & 82  & 80 & 72 & \textbf{98} & \textbf{98}   \\
Fanuc (zero-shot) & 0   & 0  & 0  & 0  & \textbf{94}   \\
Fanuc (fine-tune) & 22  & 50 & 52 & 92 & \textbf{98}   \\ \bottomrule
\end{tabular}
\label{tab: ur5 to fanuc}
\end{table}

As shown in Table \ref{tab: ur5 to fanuc}, our method maintains a high success rate even under zero-shot transfer to the Fanuc arm. Although appearance differences between embodiments cause slight degradation, our equivariant design still generalizes effectively. Other action spaces completely fail in zero-shot settings. With few-shot fine-tuning, our method and EE action space recover to the comparable performance as on UR5, while others struggle to close the gap. We also found the joint space action performs the worst when adapting to new embodiment with few demonstrations.

\section{Real-world Experiments}

We conduct real-world evaluations on two tasks, pick-place and table storage, using the Agilex Cobot platform equipped with Piper arms (Fig. \ref{fig: teleop}). For each task, we collect 30 teleoperated demonstrations. Both our policy and $\pi_0$ are fine-tuned on this data for 40k iterations with a batch size of 32. Notably, the Agilex Cobot configuration is unseen during our pre-training, while $\pi_0$ was pre-trained partially on mobile Trossen and mobile ARX, which share similar embodiment configuration with Agilex Cobot. In all experiments, $\pi_0$ is fine-tuned with joint-space actions, consistent with its pre-training setup for the mobile Trossen/ARX. Sampled successful roll-outs are visualized in Fig. \ref{fig: success_cases}.

\begin{figure}[t]
\includegraphics[width=\linewidth]{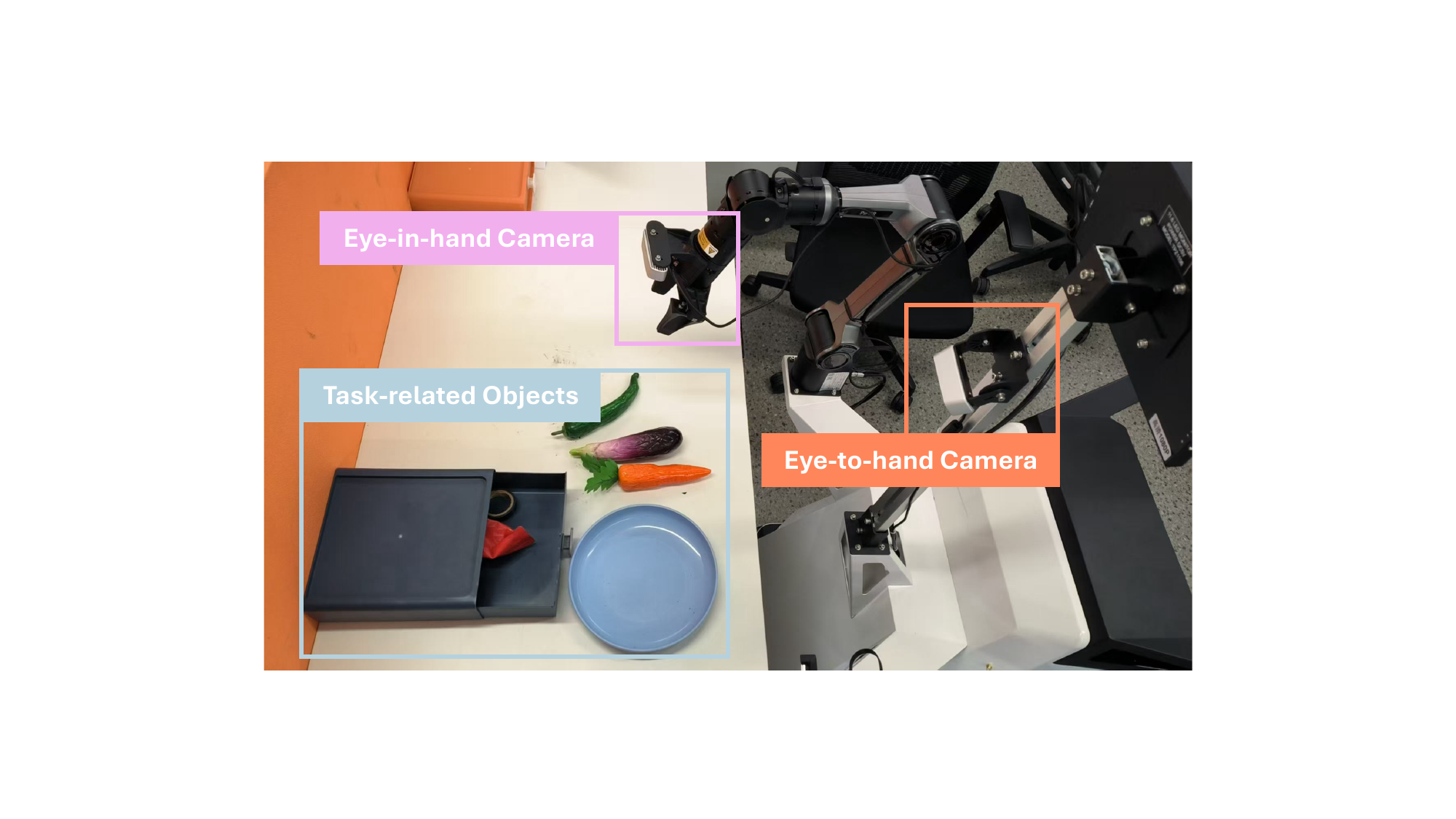}
\centering
\caption{Data collection and policy evaluation platform in real-world. We use images from two calibrated cameras as observations.}
\label{fig: teleop}
\end{figure}

\begin{figure}[t]
\includegraphics[width=\linewidth]{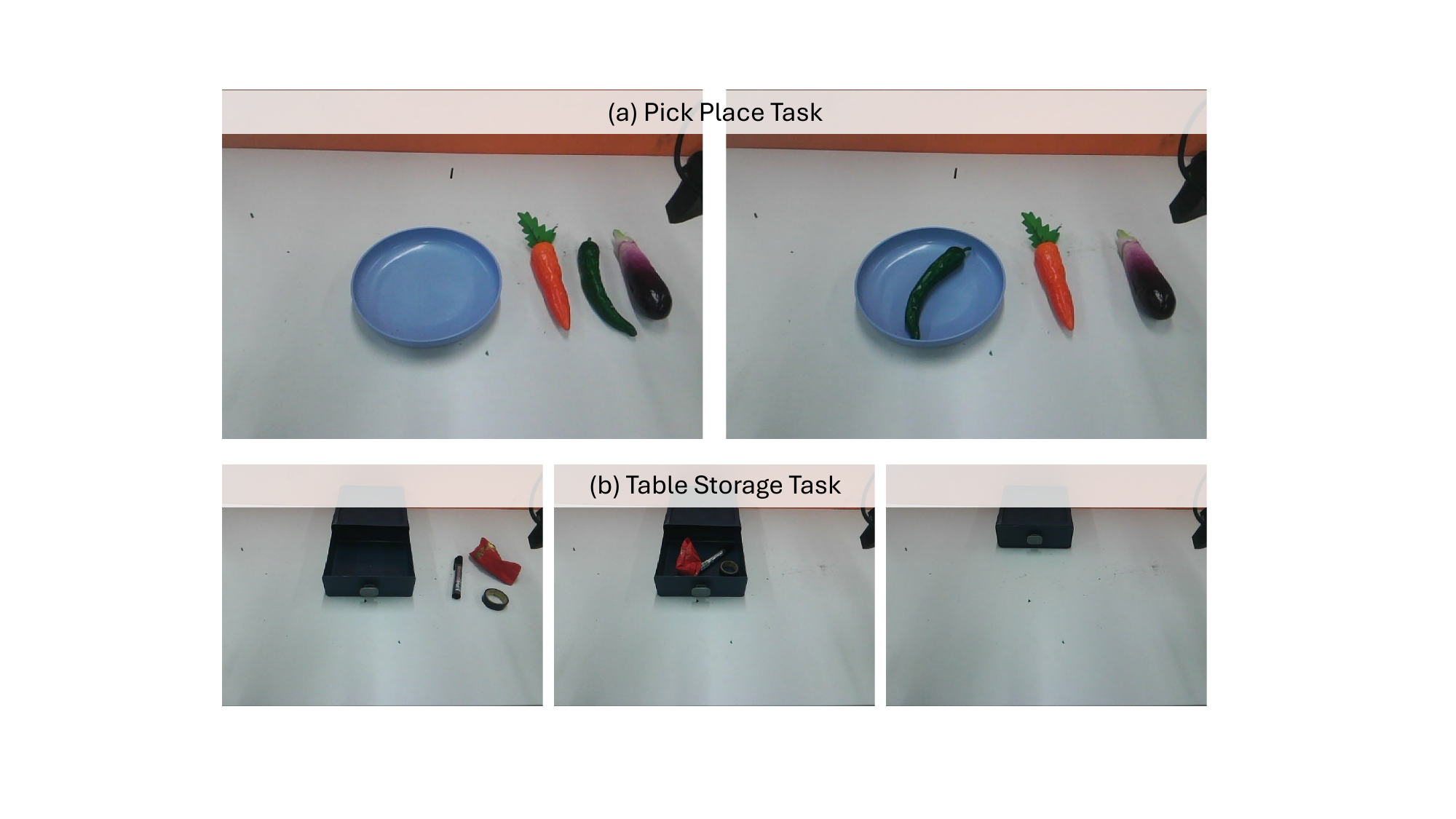}
\centering
\caption{Tasks in real-world experiments. (a) The initial state (left) and the desired final state (right) of pick place task. The example prompt is: \textit{pick up the pepper and place it on the plate}. (b) Three stages of completing the table storage task. Left to middle: pick all the items on the table and place them in the box. Middle to right: push to close the box.}
\label{fig: real_tasks}
\end{figure}

\begin{figure}[tb]
\includegraphics[width=\linewidth]{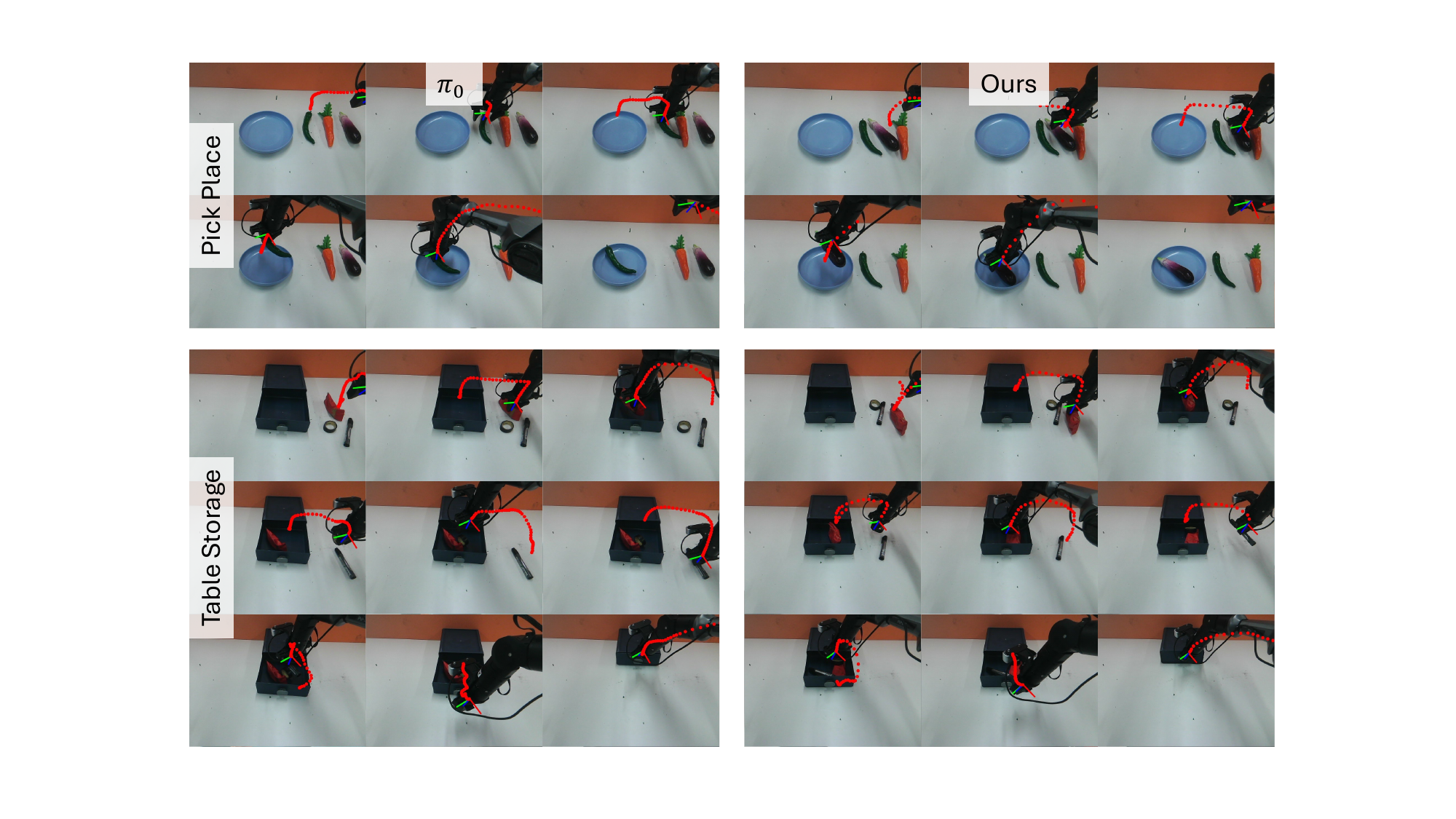}
\centering
\caption{Example success roll-outs of $\pi_0$ and our policy on pick place task and stable storage task.}
\label{fig: success_cases}
\end{figure}

\subsection{Pick-Place Task}
\textbf{Task Description.} The prompt is: \textit{Pick up the \{object\} and place it on the plate}, where the object is selected from pepper, carrot and eggplant. The example initial state and the desired final state is visualized in Fig. \ref{fig: real_tasks}a. We evaluate each policy for 60 trials. For the first 30 trials, there exists only the plate and the desired object. For the rest 30 trials, we place all the objects on the table acting as distractors.

\begin{table}[tb]
\centering
\caption{Success rate of real-world pick place task.}
\begin{tabular}{llllll}
\toprule
Distractor         & Method & Pepper & Carrot & Eggplant & Avg   \\ \midrule
\multirow{2}{*}{\XSolidBrush} & $\pi_0$    & 7/10   & 9/10   & 10/10    & 26/30 \\
                   & Ours   & 10/10  & 10/10  & 10/10    & 30/30 \\ \midrule
\multirow{2}{*}{\Checkmark} & $\pi_0$    & 4/10   & 4/10   & 3/10     & 11/30 \\
                   & Ours   & 9/10   & 10/10  & 10/10    & 29/30 \\ \bottomrule
\end{tabular}
\label{tab: pick place real}
\end{table}

\textbf{Result Analysis.} As shown in Table. \ref{tab: pick place real}, both $\pi_0$ and our policy achieve high success rate without distractors. 
However, $\pi_0$ degrades substantially when distractors are introduced, reflecting its weaker grounding between language and action given limited data for fine-tuning. In contrast, our policy maintains nearly perfect performance. Fig. \ref{fig: add_distractor_result} visualizes how distractors influence the trajectory planned by $\pi_0$, while ours remains almost unaffected.

\begin{figure}[tb]
\includegraphics[width=\linewidth]{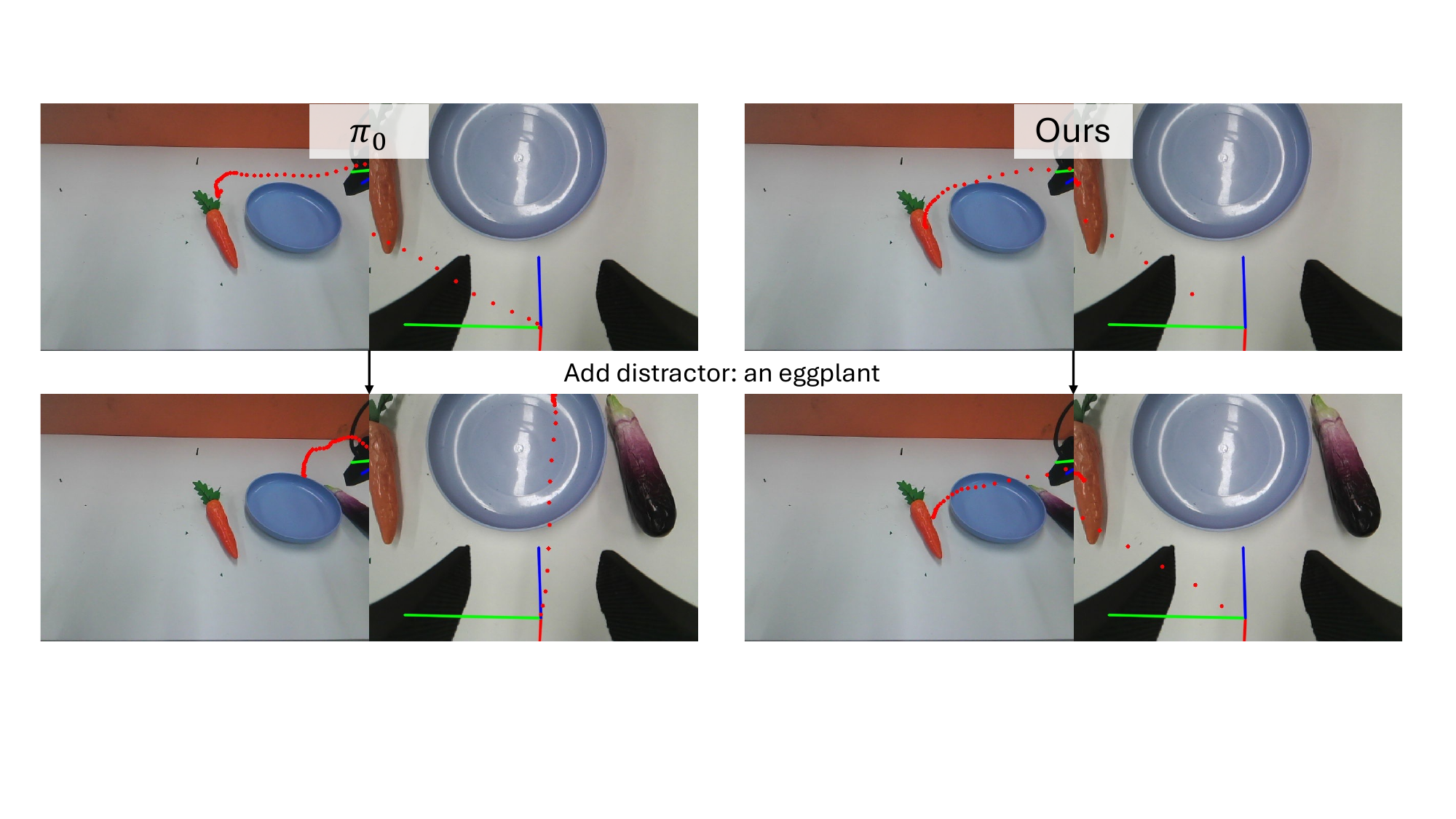}
\centering
\caption{Effect of distractors on planned trajectories (shown as red dots). The prompt is: \textit{pick up the carrot and place it on the plate}. Both $\pi_0$ and our policy plan correct trajectories without distractors. However, When an eggplant is placed beside the plate, $\pi_0$ is easily distracted, while ours remains robust.}
\label{fig: add_distractor_result}
\end{figure}

\subsection{Table Storage Task}
\textbf{Task Description.} The prompt is: \textit{Put the items on the table into the storage box and close the box}. The robot needs to pick up all the items (including a marker pen, tape and snack) on the table, place them in the box, and finally push the box closed. The sample initial state, intermediate state and desired final state is visualized in Fig. \ref{fig: real_tasks}b. We evaluate the policies for 40 trials. For the first 30 trials, we only perturb the position of the box and the items. For the rest 10 trials, we perturb the pose of robot base, resulting in different camera viewpoints (Fig. \ref{fig: pose_disturb}).

\begin{figure}[tb]
\includegraphics[width=\linewidth]{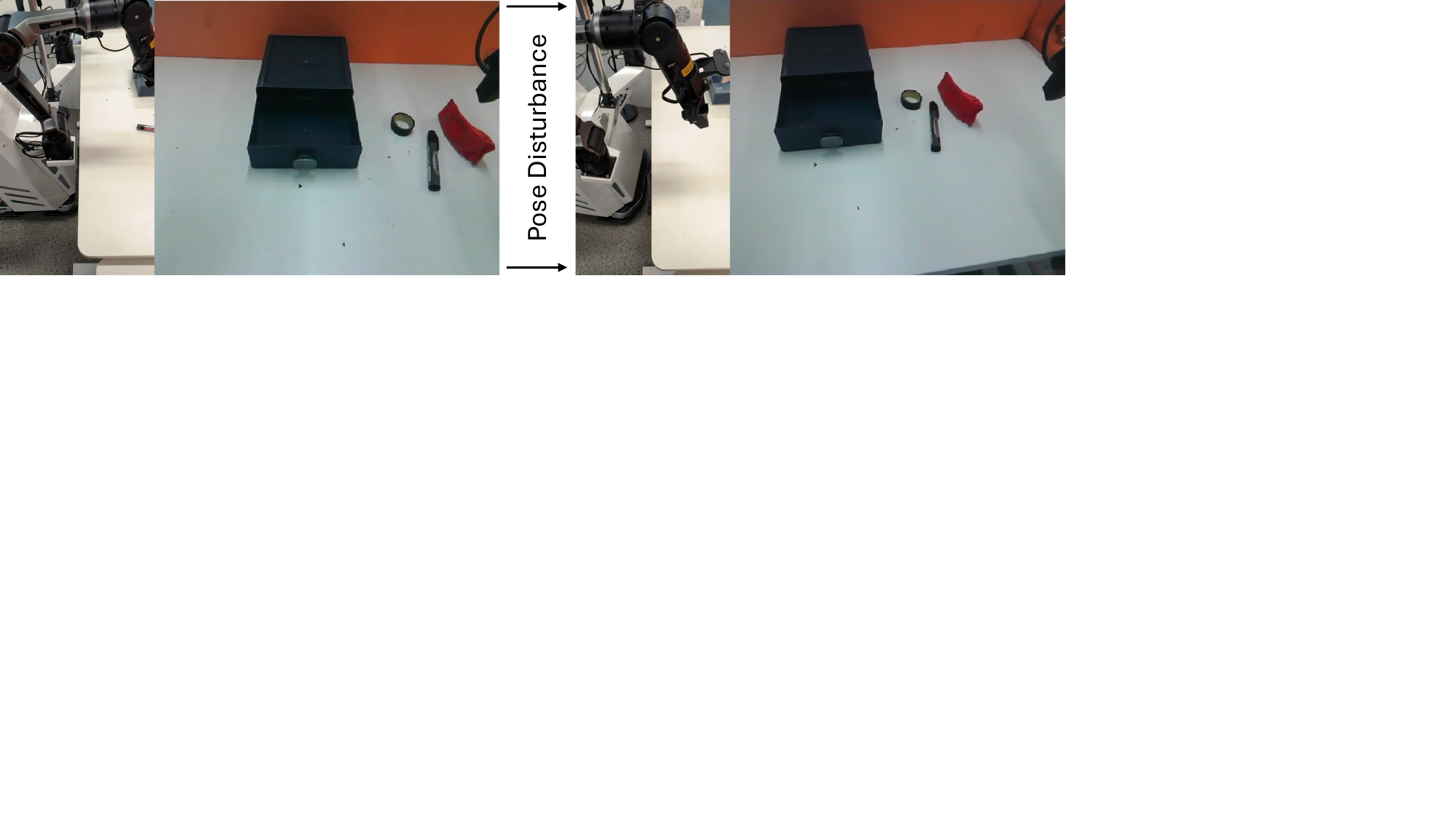}
\centering
\caption{Robot base pose disturbance is applied in the last 10 trials evaluation of table storage task. }
\label{fig: pose_disturb}
\end{figure}

\textbf{Result Analysis.} As completing the table storage requiring multiple pick-place and push attempts, we report the success rate of each step. 
Results (Table. \ref{tab: table_storage}) show that our policy consistently outperforms $\pi_0$, particularly under base pose disturbance. The robustness comes from representing relative end-effector motions in the camera frame, whereas $\pi_0$ overfits to embodiment-specific dynamics and fails to close the box.

\begin{figure}[tb]
\includegraphics[width=\linewidth]{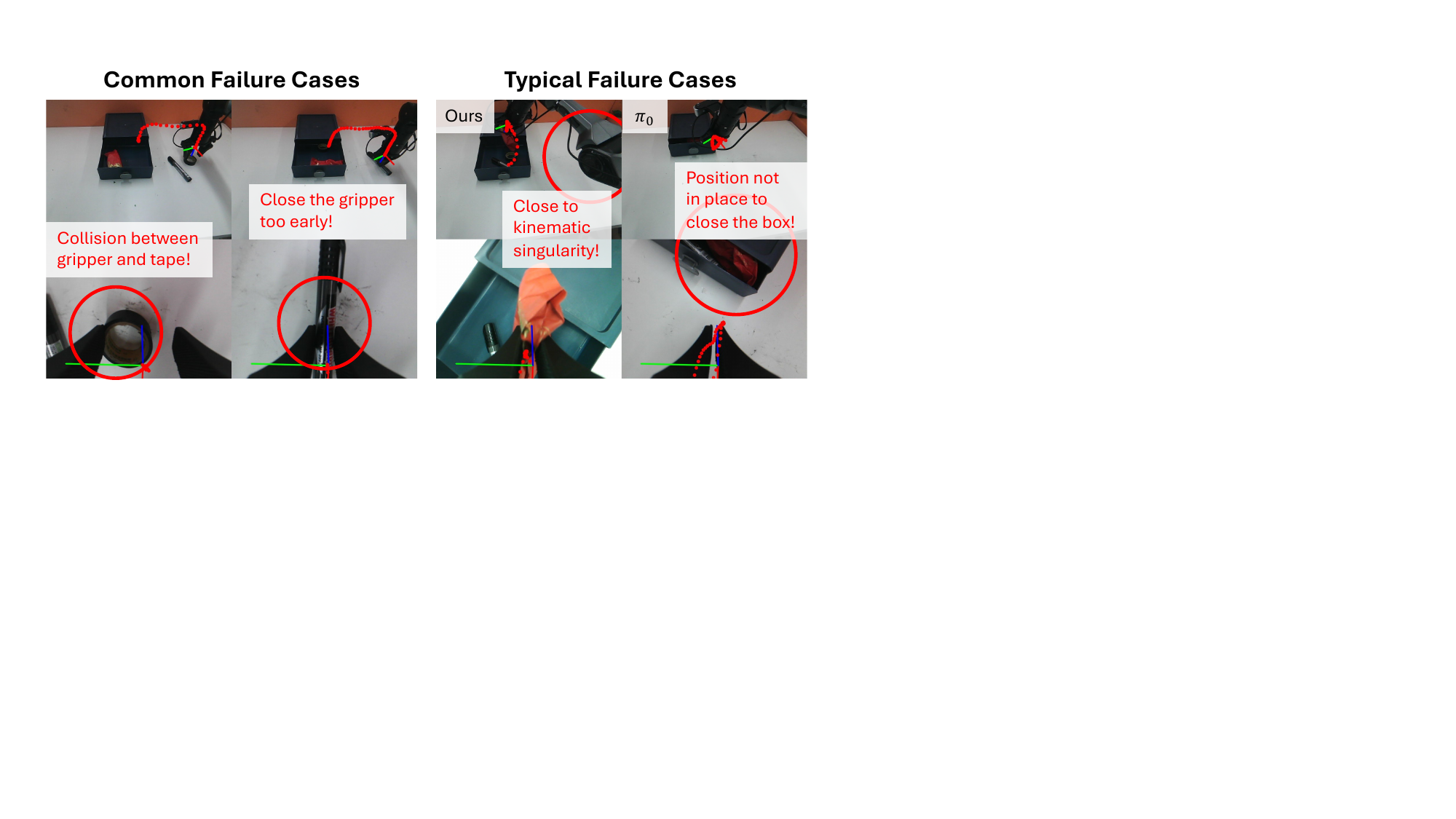}
\centering
\caption{Common and typical failure cases of our policy and $\pi_0$.}
\label{fig: failure cases}
\end{figure}

\begin{table}[tb]
\centering
\caption{Success rate of real-world table storage task.}
\begin{tabular}{lllllll}
\toprule
Disturbance       & Method & Pen   & Snack & Tape  & Box   & Overall \\ \midrule
\multirow{2}{*}{\XSolidBrush} & $\pi_0$    & 27/30 & 28/30 & 24/30 & 22/30 & 20/30   \\
                   & Ours   & 29/30 & 28/30 & 26/30 & 27/30 & 24/30   \\ \midrule
\multirow{2}{*}{\Checkmark} & $\pi_0$    & 6/10  & 8/10  & 5/10  & 0/10  & 0/10    \\
                   & Ours   & 9/10  & 10/10 & 10/10 & 9/10  & 9/10    \\ \bottomrule
\end{tabular}
\label{tab: table_storage}
\end{table}

\textbf{Failure Case Analysis.} As shown in Fig. \ref{fig: failure cases}, we found the most common issue arises from inaccurate Z-axis localization using the eye-in-hand camera, especially when grasping slender objects like the pen. The inaccurate XY-axis localization may cause collision between the gripper and the tape. 
$\pi_0$ frequently fails at the box-closing step due to its reliance on joint-space actions, which cannot generalize well with pose disturbance. Our policy occasionally fails under unreachable poses or kinematic singularities, though it is generally more robust overall.

\section{Limitations}
Although our action space and network architecture design fully leverage the relative pose representation in camera space for efficient policy fine-tuning, it requires camera extrinsic and intrinsic recorded. 
In practice, many large-scale real-world robotic datasets provide high-quality images, detailed language annotations, and diverse environments, but often lack reliable camera calibration, or even doesn't record camera parameters.
This limitation currently prevents internet large-scale pre-training of our policy. However, we have seen more researchers realizing the importance of camera parameters in data collection and policy learning. 
In addition, emerging approaches such as VGGT \cite{vggt} demonstrate the potential of recovering pseudo camera poses from images, which may help alleviate this constraint in future work.

\section{Conclusion}
In this paper, we propose an action space invariant to embodiment configuration transformation and the corresponding action decoder equivariant to the transformation. 
Extensive simulation experiments on the LIBERO benchmark demonstrated that our policy achieves state-of-the-art performance with significantly reduced fine-tuning cost. Moreover, our approach enables efficient transfer to unseen embodiments, achieving high success rates even under zero- and few-shot settings. Real-world experiments further validated the effectiveness and robustness of our method in manipulation tasks with distractors and embodiment perturbations. Our proposed action space and architecture design is a feasible choice for effective cross-embodiment pre-training and efficient fine-tuning.

\balance
\bibliographystyle{IEEEtran}
\bibliography{main}

\begin{thebibliography}{10}
\providecommand{\url}[1]{#1}
\csname url@samestyle\endcsname
\providecommand{\newblock}{\relax}
\providecommand{\bibinfo}[2]{#2}
\providecommand{\BIBentrySTDinterwordspacing}{\spaceskip=0pt\relax}
\providecommand{\BIBentryALTinterwordstretchfactor}{4}
\providecommand{\BIBentryALTinterwordspacing}{\spaceskip=\fontdimen2\font plus
\BIBentryALTinterwordstretchfactor\fontdimen3\font minus \fontdimen4\font\relax}
\providecommand{\BIBforeignlanguage}[2]{{%
\expandafter\ifx\csname l@#1\endcsname\relax
\typeout{** WARNING: IEEEtran.bst: No hyphenation pattern has been}%
\typeout{** loaded for the language `#1'. Using the pattern for}%
\typeout{** the default language instead.}%
\else
\language=\csname l@#1\endcsname
\fi
#2}}
\providecommand{\BIBdecl}{\relax}
\BIBdecl

\bibitem{rt1}
\BIBentryALTinterwordspacing
A.~Brohan, N.~Brown, J.~Carbajal, Y.~Chebotar, J.~Dabis, C.~Finn, K.~Gopalakrishnan, K.~Hausman, A.~Herzog, J.~Hsu \emph{et~al.}, ``{RT-1:} robotics transformer for real-world control at scale,'' in \emph{RSS}, 2023. [Online]. Available: \url{https://doi.org/10.15607/RSS.2023.XIX.025}
\BIBentrySTDinterwordspacing

\bibitem{RT-2}
B.~Zitkovich, T.~Yu, S.~Xu, P.~Xu, T.~Xiao, F.~Xia, J.~Wu, P.~Wohlhart, S.~Welker, A.~Wahid \emph{et~al.}, ``{RT-2:} vision-language-action models transfer web knowledge to robotic control,'' in \emph{CoRL}, ser. Proceedings of Machine Learning Research, vol. 229.\hskip 1em plus 0.5em minus 0.4em\relax {PMLR}, 2023, pp. 2165--2183.

\bibitem{rdt}
S.~Liu, L.~Wu, B.~Li, H.~Tan, H.~Chen, Z.~Wang, K.~Xu, H.~Su, and J.~Zhu, ``{RDT-1B:} a diffusion foundation model for bimanual manipulation,'' in \emph{ICLR}.\hskip 1em plus 0.5em minus 0.4em\relax OpenReview.net, 2025.

\bibitem{pi0}
\BIBentryALTinterwordspacing
K.~Black, N.~Brown, D.~Driess, A.~Esmail, M.~Equi, C.~Finn, N.~Fusai, L.~Groom, K.~Hausman, B.~Ichter \emph{et~al.}, ``$\pi_0$: A vision-language-action flow model for general robot control,'' 2024. [Online]. Available: \url{https://arxiv.org/abs/2410.24164}
\BIBentrySTDinterwordspacing

\bibitem{openvla}
M.~J. Kim, K.~Pertsch, S.~Karamcheti, T.~Xiao, A.~Balakrishna, S.~Nair, R.~Rafailov, E.~P. Foster, P.~R. Sanketi, Q.~Vuong \emph{et~al.}, ``Openvla: An open-source vision-language-action model,'' in \emph{CoRL}, ser. Proceedings of Machine Learning Research, vol. 270.\hskip 1em plus 0.5em minus 0.4em\relax {PMLR}, 2024, pp. 2679--2713.

\bibitem{e2cnn}
M.~Weiler and G.~Cesa, ``General e(2)-equivariant steerable cnns,'' in \emph{NIPS}, 2019, pp. 14\,334--14\,345.

\bibitem{se3transformer}
F.~Fuchs, D.~E. Worrall, V.~Fischer, and M.~Welling, ``Se(3)-transformers: 3d roto-translation equivariant attention networks,'' in \emph{NIPS}, 2020.

\bibitem{engnn}
V.~G. Satorras, E.~Hoogeboom, and M.~Welling, ``E(n) equivariant graph neural networks,'' in \emph{ICML}, ser. Proceedings of Machine Learning Research, vol. 139.\hskip 1em plus 0.5em minus 0.4em\relax {PMLR}, 2021, pp. 9323--9332.

\bibitem{ACT}
\BIBentryALTinterwordspacing
T.~Z. Zhao, V.~Kumar, S.~Levine, and C.~Finn, ``Learning fine-grained bimanual manipulation with low-cost hardware,'' in \emph{RSS}, 2023. [Online]. Available: \url{https://doi.org/10.15607/RSS.2023.XIX.016}
\BIBentrySTDinterwordspacing

\bibitem{Octo}
\BIBentryALTinterwordspacing
D.~Ghosh, H.~R. Walke, K.~Pertsch, K.~Black, O.~Mees, S.~Dasari, J.~Hejna, T.~Kreiman, C.~Xu, J.~Luo \emph{et~al.}, ``Octo: An open-source generalist robot policy,'' in \emph{RSS}, 2024. [Online]. Available: \url{https://doi.org/10.15607/RSS.2024.XX.090}
\BIBentrySTDinterwordspacing

\bibitem{robodual}
\BIBentryALTinterwordspacing
Q.~Bu, H.~Li, L.~Chen, J.~Cai, J.~Zeng, H.~Cui, M.~Yao, and Y.~Qiao, ``Towards synergistic, generalized, and efficient dual-system for robotic manipulation,'' 2025. [Online]. Available: \url{https://arxiv.org/abs/2410.08001}
\BIBentrySTDinterwordspacing

\bibitem{fis-vla}
\BIBentryALTinterwordspacing
H.~Chen, J.~Liu, C.~Gu, Z.~Liu, R.~Zhang, X.~Li, X.~He, Y.~Guo, C.-W. Fu, S.~Zhang \emph{et~al.}, ``Fast-in-slow: A dual-system foundation model unifying fast manipulation within slow reasoning,'' 2025. [Online]. Available: \url{https://arxiv.org/abs/2506.01953}
\BIBentrySTDinterwordspacing

\bibitem{openhelix}
\BIBentryALTinterwordspacing
C.~Cui, P.~Ding, W.~Song, S.~Bai, X.~Tong, Z.~Ge, R.~Suo, W.~Zhou, Y.~Liu, B.~Jia \emph{et~al.}, ``Openhelix: A short survey, empirical analysis, and open-source dual-system vla model for robotic manipulation,'' 2025. [Online]. Available: \url{https://arxiv.org/abs/2505.03912}
\BIBentrySTDinterwordspacing

\bibitem{diffusion_policy}
\BIBentryALTinterwordspacing
C.~Chi, S.~Feng, Y.~Du, Z.~Xu, E.~Cousineau, B.~Burchfiel, and S.~Song, ``Diffusion policy: Visuomotor policy learning via action diffusion,'' in \emph{RSS}, 2023. [Online]. Available: \url{https://doi.org/10.15607/RSS.2023.XIX.026}
\BIBentrySTDinterwordspacing

\bibitem{dexvla}
\BIBentryALTinterwordspacing
J.~Wen, Y.~Zhu, J.~Li, Z.~Tang, C.~Shen, and F.~Feng, ``Dexvla: Vision-language model with plug-in diffusion expert for general robot control,'' 2025. [Online]. Available: \url{https://arxiv.org/abs/2502.05855}
\BIBentrySTDinterwordspacing

\bibitem{diffusionvla}
\BIBentryALTinterwordspacing
J.~Wen, M.~Zhu, Y.~Zhu, Z.~Tang, J.~Li, Z.~Zhou, C.~Li, X.~Liu, Y.~Peng, C.~Shen \emph{et~al.}, ``Diffusion-vla: Generalizable and interpretable robot foundation model via self-generated reasoning,'' 2025. [Online]. Available: \url{https://arxiv.org/abs/2412.03293}
\BIBentrySTDinterwordspacing

\bibitem{susie}
\BIBentryALTinterwordspacing
K.~Black, M.~Nakamoto, P.~Atreya, H.~Walke, C.~Finn, A.~Kumar, and S.~Levine, ``Zero-shot robotic manipulation with pretrained image-editing diffusion models,'' 2023. [Online]. Available: \url{https://arxiv.org/abs/2310.10639}
\BIBentrySTDinterwordspacing

\bibitem{gr1}
H.~Wu, Y.~Jing, C.~Cheang, G.~Chen, J.~Xu, X.~Li, M.~Liu, H.~Li, and T.~Kong, ``Unleashing large-scale video generative pre-training for visual robot manipulation,'' in \emph{ICLR}.\hskip 1em plus 0.5em minus 0.4em\relax OpenReview.net, 2024.

\bibitem{gr2}
\BIBentryALTinterwordspacing
C.-L. Cheang, G.~Chen, Y.~Jing, T.~Kong, H.~Li, Y.~Li, Y.~Liu, H.~Wu, J.~Xu, Y.~Yang \emph{et~al.}, ``Gr-2: A generative video-language-action model with web-scale knowledge for robot manipulation,'' 2024. [Online]. Available: \url{https://arxiv.org/abs/2410.06158}
\BIBentrySTDinterwordspacing

\bibitem{gr3}
\BIBentryALTinterwordspacing
C.~Cheang, S.~Chen, Z.~Cui, Y.~Hu, L.~Huang, T.~Kong, H.~Li, Y.~Li, Y.~Liu, X.~Ma \emph{et~al.}, ``Gr-3 technical report,'' 2025. [Online]. Available: \url{https://arxiv.org/abs/2507.15493}
\BIBentrySTDinterwordspacing

\bibitem{atm}
\BIBentryALTinterwordspacing
C.~Wen, X.~Lin, J.~I.~R. So, K.~Chen, Q.~Dou, Y.~Gao, and P.~Abbeel, ``Any-point trajectory modeling for policy learning,'' in \emph{RSS}, 2024. [Online]. Available: \url{https://doi.org/10.15607/RSS.2024.XX.092}
\BIBentrySTDinterwordspacing

\bibitem{univla}
\BIBentryALTinterwordspacing
Q.~Bu, Y.~Yang, J.~Cai, S.~Gao, G.~Ren, M.~Yao, P.~Luo, and H.~Li, ``Univla: Learning to act anywhere with task-centric latent actions,'' 2025. [Online]. Available: \url{https://arxiv.org/abs/2505.06111}
\BIBentrySTDinterwordspacing

\bibitem{oxe}
\BIBentryALTinterwordspacing
A.~O'Neill, A.~Rehman, A.~Maddukuri, A.~Gupta, A.~Padalkar, A.~Lee, A.~Pooley, A.~Gupta, A.~Mandlekar, A.~Jain \emph{et~al.}, ``Open x-embodiment: Robotic learning datasets and {RT-X} models : Open x-embodiment collaboration,'' in \emph{ICRA}.\hskip 1em plus 0.5em minus 0.4em\relax {IEEE}, 2024, pp. 6892--6903. [Online]. Available: \url{https://doi.org/10.1109/ICRA57147.2024.10611477}
\BIBentrySTDinterwordspacing

\bibitem{pushing-the-limit}
\BIBentryALTinterwordspacing
J.~H. Yang, C.~Glossop, A.~Bhorkar, D.~Shah, Q.~Vuong, C.~Finn, D.~Sadigh, and S.~Levine, ``Pushing the limits of cross-embodiment learning for manipulation and navigation,'' in \emph{RSS}, 2024. [Online]. Available: \url{https://doi.org/10.15607/RSS.2024.XX.093}
\BIBentrySTDinterwordspacing

\bibitem{rise}
C.~Wang, H.~Fang, H.-S. Fang, and C.~Lu, ``Rise: 3d perception makes real-world robot imitation simple and effective,'' in \emph{IROS}, 2024, pp. 2870--2877.

\bibitem{gta}
T.~Miyato, B.~Jaeger, M.~Welling, and A.~Geiger, ``{GTA:} {A} geometry-aware attention mechanism for multi-view transformers,'' in \emph{ICLR}, 2024.

\bibitem{prope}
\BIBentryALTinterwordspacing
R.~Li, B.~Yi, J.~Liu, H.~Gao, Y.~Ma, and A.~Kanazawa, ``Cameras as relative positional encoding,'' 2025. [Online]. Available: \url{https://arxiv.org/abs/2507.10496}
\BIBentrySTDinterwordspacing

\bibitem{dinov2}
M.~Oquab, T.~Darcet, T.~Moutakanni, H.~V. Vo, M.~Szafraniec, V.~Khalidov, P.~Fernandez, D.~Haziza, F.~Massa, A.~El{-}Nouby \emph{et~al.}, ``Dinov2: Learning robust visual features without supervision,'' \emph{Trans. Mach. Learn. Res.}, vol. 2024, 2024.

\bibitem{siglip}
\BIBentryALTinterwordspacing
X.~Zhai, B.~Mustafa, A.~Kolesnikov, and L.~Beyer, ``Sigmoid loss for language image pre-training,'' in \emph{ICCV}.\hskip 1em plus 0.5em minus 0.4em\relax {IEEE}, 2023, pp. 11\,941--11\,952. [Online]. Available: \url{https://doi.org/10.1109/ICCV51070.2023.01100}
\BIBentrySTDinterwordspacing

\bibitem{instructblip}
W.~Dai, J.~Li, D.~Li, A.~M.~H. Tiong, J.~Zhao, W.~Wang, B.~Li, P.~Fung, and S.~C.~H. Hoi, ``Instructblip: Towards general-purpose vision-language models with instruction tuning,'' in \emph{NeurIPS}, 2023.

\bibitem{film}
\BIBentryALTinterwordspacing
E.~Perez, F.~Strub, H.~de~Vries, V.~Dumoulin, and A.~C. Courville, ``Film: Visual reasoning with a general conditioning layer,'' in \emph{AAAI}.\hskip 1em plus 0.5em minus 0.4em\relax {AAAI} Press, 2018, pp. 3942--3951. [Online]. Available: \url{https://doi.org/10.1609/aaai.v32i1.11671}
\BIBentrySTDinterwordspacing

\bibitem{droid}
\BIBentryALTinterwordspacing
A.~Khazatsky, K.~Pertsch, S.~Nair, A.~Balakrishna, S.~Dasari, S.~Karamcheti, S.~Nasiriany, M.~K. Srirama, L.~Y. Chen, K.~Ellis \emph{et~al.}, ``{DROID:} {A} large-scale in-the-wild robot manipulation dataset,'' in \emph{RSS}, 2024. [Online]. Available: \url{https://doi.org/10.15607/RSS.2024.XX.120}
\BIBentrySTDinterwordspacing

\bibitem{maniskill}
\BIBentryALTinterwordspacing
T.~Mu, Z.~Ling, F.~Xiang, D.~Yang, X.~Li, S.~Tao, Z.~Huang, Z.~Jia, and H.~Su, ``Maniskill: Generalizable manipulation skill benchmark with large-scale demonstrations,'' 2021. [Online]. Available: \url{https://arxiv.org/abs/2107.14483}
\BIBentrySTDinterwordspacing

\bibitem{metaworld}
T.~Yu, D.~Quillen, Z.~He, R.~Julian, K.~Hausman, C.~Finn, and S.~Levine, ``Meta-world: {A} benchmark and evaluation for multi-task and meta reinforcement learning,'' in \emph{CoRL}, ser. Proceedings of Machine Learning Research, vol. 100.\hskip 1em plus 0.5em minus 0.4em\relax {PMLR}, 2019, pp. 1094--1100.

\bibitem{libero}
B.~Liu, Y.~Zhu, C.~Gao, Y.~Feng, Q.~Liu, Y.~Zhu, and P.~Stone, ``{LIBERO:} benchmarking knowledge transfer for lifelong robot learning,'' in \emph{NeurIPS}, 2023.

\bibitem{rot6d}
Y.~Zhou, C.~Barnes, J.~Lu, J.~Yang, and H.~Li, ``On the continuity of rotation representations in neural networks,'' in \emph{CVPR}, 2019, pp. 5738--5746.

\bibitem{vggt}
J.~Wang, M.~Chen, N.~Karaev, A.~Vedaldi, C.~Rupprecht, and D.~Novotny, ``Vggt: Visual geometry grounded transformer,'' in \emph{CVPR}, 2025.

\end{thebibliography}

\clearpage

\end{document}